\algrenewcommand\algorithmicrequire{\textbf{Input:}}
\algrenewcommand\algorithmicensure{\textbf{Output:}}
\newcommand{\ones}{\mathbf{1}}
\newcommand{\RR}{\mathbb{R}}
\newcommand{\dd}{\; \mathrm{d}}
\DeclareMathOperator{\dom}{Dom}
\DeclareMathOperator{\codom}{codom}
\newcommand{\cheng}[1]{}
\title{Multivariate Spearman's $\rho$ for Aggregating Ranks Using Copulas}
\author{\name{Justin Bed\H{o}}\email{cu@cua0.org} \\
  \addr{The Walter and Eliza Hall Institute, 1G Royal Parade, Parkville Victoria
    3052, Australia\\
    The Department of Computing and Information Systems, the
    University of Melbourne, VIC 3010 Australia}
  \AND\name{Cheng Soon Ong}\email{chengsoon.ong@anu.edu.au}\\
  \addr{Data61, CSIRO, 7 London Circuit, Canberra ACT 2601 Australia\\
    Research School of Computer Science, the Australian National
    University, Australia\\
    The Department of Electrical and Electronic Engineering,
    the University of Melbourne, VIC 3010 Australia\\
    }}
\begin{document}

	\maketitle

	\begin{abstract}%
		We study the problem of rank aggregation: given a set of ranked lists,
		we want to form a consensus ranking.
		Furthermore, we consider the case of extreme lists: i.e., only the
		rank of the best or worst elements are known.
		We impute missing ranks and generalise
		Spearman's $\rho$ to extreme ranks.
		Our main contribution is the
		derivation of a non-parametric estimator
		for rank aggregation based on multivariate extensions of Spearman's $\rho$,
		which measures correlation between a set of ranked lists.
		Multivariate Spearman's $\rho$ is defined using copulas, and we show that
		the geometric mean of normalised ranks maximises multivariate
		correlation.
		Motivated by this, we propose a weighted geometric mean approach for
		learning to rank which has a closed form least squares solution.
		When only the best (top-k) or worst (bottom-k) elements of a
		ranked list are known, we
		impute the missing ranks by the average value, allowing us to apply
		Spearman's $\rho$.
		We discuss an optimistic and pessimistic imputation
		of missing values, which respectively maximise and minimise correlation,
		and show its effect on aggregating university rankings.
		Finally, we demonstrate good performance on the rank aggregation
		benchmarks MQ2007 and MQ2008.
	\end{abstract}

	\section{Introduction}

	Ranking is a central task in many applications such as information
	retrieval, recommender systems and bioinformatics.
	It may also be a subtask of other learning problems such as
	feature selection, where features
	are scored according to their predictiveness, and then the most
	significant ones are selected.
	One major advantage of ranks over scores is that the resulting
	predicted ranks are automatically normalised and hence can be used to
	combine diverse sources of information.
	However, unlike many other supervised learning problems, the problem
	of learning to rank~\citep{lebanon08nonpmp,liu08learir} does not have
	the simple one example one label paradigm.
	This has led to many formulations of learning tasks, depending on what
	label information is available, including pairwise ranking, listwise
	ranking and rank aggregation.

	This paper considers a novel formulation of rank aggregation
	based on multivariate extensions to Spearman's $\rho$.
	For a set of $n$ objects from the domain $\Omega$, we are given a set of
	$d$ experts that rank these objects providing rankings $R_1,\ldots,R_d$.
	Each rank is a permutation of the $n$ objects, and can be represented as a
	vector of unique integers from 1 to $n$.
	The problem of rank
	aggregation is to construct a new vector $R$ that is most similar to
	the set of $d$ ranks provided by the experts.
	In this paper we use Spearman's correlation $\rho$, a widely used
	correlation measure for ranks~\cite{spearman04promat}.
	Instead of decomposing the association into a combination of pairwise
	similarities, $\rho(R, R_1),\rho(R, R_2),\ldots,\rho(R, R_d)$, we
	directly maximise the multivariate correlation
	\[
		R^* = \arg\max_R \rho(R, R_1, R_2,\ldots, R_d).
	\]

	Measures of association such as Spearman's $\rho$ capture the
	concordance between random variables~\citep{nelsen06intc}.
	Informally,
	random variables are concordant if large values of one tend to be
	associated with large values of the other.
	Let $(x_i,y_i)$ and
	$(x_j,y_j)$ be two observations of a pair of continuous
	random variables.
	We say that $(x_i,y_i)$ and $(x_j,y_j)$ are
	\emph{concordant} if $x_i<x_j$ and $y_i<y_j$ or if $x_i>x_j$ and
	$y_i>y_j$.
	If the inequalities disagree, we say that the samples are
	\emph{discordant}.
	The concept of concordance captures only the order
	of the random variables, and is invariant to their values, and
	therefore is ideal for analysing ranks.
	As will be described in \cref{sec:multi-spearman},
	Spearman's $\rho$ is based on the
	difference between the concordance and discordance of the samples.

	In short, Spearman's correlation can be defined as the concordance $Q$
	between the copula $C$ corresponding to the data and the independent
	copula $\pi$
	\[
		\rho \propto Q(C,\pi).
	\]
	We review the concept of copulas in \cref{sec:copulas} and
	derive our generalisation of concordance in \cref{sec:spearman}.
	While the mathematical machinery to derive our proposed algorithm
	relies on constructions that may not be familiar to some machine
	learners, the resulting algorithm for rank aggregation is
	straightforward.
	We solve a least squares problem for $n$ items,
	\[
		\min_\omega \sum_{x=1}^n{\left(l(x) - \sum_{j=1}^d\omega_d r_d(x)\right)}^2,
	\]
	where we minimise the weights $\omega_1,\dots,\omega_d$ corresponding
	to the $d$ experts.
	It turns out that the appropriate transformation to learn weights
	between experts is to use logarithmic scaled ranks.
	In the above equation, $l(x)$ and $r(x)$ denote the logarithm of the
	labels and individual expert ranks respectively, with all ranks
	normalised uniformly to the interval $(0,1)$.
	Since it is a least squares problem, there is a closed form solution
	for the optimal weights.
	This is in contrast to previous approaches to rank aggregation that
	involve complex optimisation methods or sampling.

	\subsection{Our Contributions}

	We theoretically justify why the above
	least squares problem provides a meaningful way to weight experts.
	We show that the geometric mean of a set of normalised ranks maximises
	multivariate Spearman's $\rho$.
	This motivates our method which finds a setting of weights that
	maximise multivariate Spearman's $\rho$ for a specific target (supervised
	rank aggregation).

	As previously mentioned, in many applications of rank aggregation,
	only extreme ranks are available, whereas the standard definitions of
	Spearman's $\rho$ require full ranks.
	For practical problems, the expert may only rank the most liked
	(top-$k$) or most disliked (bottom-$k$) objects where $k$ can be
	different for each expert.
	We propose a method for estimating Spearman's $\rho$ for extreme ranks by
	imputing the remaining ranks.
	We describe this method and show that it is an unbiased estimator in
	\cref{sec:rank-imputation}.

	This results in a non-parametric approach for rank aggregation that
	learns the weights of experts by solving a least squares problem.
	The weights in this case model dependencies between the rankings, i.e., the
	rankings are not independent.
	This is different to much prior work (see \cref{sec:related}) in that we
	explicitly learn the dependencies between experts simultaneously and not in a
	pairwise fashion.
	Our method thus offers significant computational benefits, modelling
	flexibility in the presence of dependencies between experts, and also
	interpretability due to the simplicity of the model.
	In \cref{sec:benchmarks} we describe our empirical results for rank
	aggregation and show that our simple algorithm performs better than
	current state of the art results.

	\subsection{Multiple Representations of Ranks}

	There are a wide range of applications which benefit from rank analysis,
	resulting in various equivalent ways to represent ranks and orderings.
	The basic representation often used in introductory texts is to provide
	the list of objects, for example $[a, b, c, d, e, f]$, denoting the fact
	that $a$ is the most highly ranked object and $f$ is the lowest ranked.
	It is often more convenient to numerically represent the rank for computational
	purposes, that is to keep a list of integers $1,\dots, n$ corresponding
	to the rank of a particular object.
	For the example above, by maintaining
	the set of objects as is, the ranks are then $[1, 2, 3, 4, 5, 6]$.
	It turns out for empirical copula modeling, it is important that the
	numerical values are in the interval $(0,1)$, and therefore we normalise
	the numerical representation by $n+1$, that is
	$[\frac{1}{7}, \frac{2}{7}, \frac{3}{7}, \frac{4}{7}, \frac{5}{7},
	\frac{6}{7}]$.
	However, note that the numerical representation is actually dependent on the
	fact that we have maintained the set of objects in a particular fashion.
	In fact, by the above numerical list, we are saying that object $a$ has rank
	$\frac{1}{7}$,
	and object $f$ has rank $\frac{6}{7}$.
	In other words, we are defining a permutation
	mapping $R: \Omega \to (0,1)$ from the space of objects $\Omega$ to the interval
	$(0,1)$.

	\subsection{Related Work}\label{sec:related}

	There are two related rank aggregation tasks: score based rank
	aggregation and order based rank aggregation.
	For score based rank aggregation objects are associated with scores,
	while for order based rank aggregation only the relative order of
	objects are available.
	There has been recent work on combining both scores and
	ranks~\citep{sculley10comrr,iyer13lovbdc}.
	We consider the learning task referred to as the listwise approach in
	\citet{liu08learir}, where the input is a set of ranked lists of
	documents from multiple experts, and the learner has to predict the
	final ranks.
	Numerous proposals for solving the problem of combining multiple lists
	into a single list are surveyed in \citet{liu08learir}.
	\citet{Niu2012} has focused on learning a good ranking from given
	features.
	A good review of probability and statistics applied to permutations is
	\citet{diaconis88grorps}.

	Spearman's $\rho$ is a natural measure of similarity for distributions of
	permutations~\citep{mallows57nonnrm,fligner86disbrm}.
	Interestingly, there has not been much work using Spearman's $\rho$ for
	dealing with ranked data, but instead the focus has been on Kendall's
	$\tau$.
	One difficulty of inference with the Mallows model~\citep{mallows57nonnrm} for
	Spearman's $\rho$ is that it involves estimating the permanent of a matrix.
	Our model is derived from the copula form of Spearman's $\rho$ and allows a simple
	formulation for aggregation that does not require any computationally complex
	operations, thus providing a significant computational advantage.

	Other previous approaches~\citep{klementiev08unsrad,iyer12subbld} to rank
	aggregation considers pairwise comparisons between ranked lists.
	In contrast, our approach does not consider pairwise combinations and operates
	over all lists.
	We prove a result saying that the geometric mean of normalised ranks
	maximise Spearman's $\rho$ (\cref{prop:rho-max-mean}), which is similar in
	spirit to the result in \citet{iyer12subbld} that shows that for
	Lov\'asz--Bregman divergences the best aggregator is the arithmetic mean.
	This provides a computational advantage over pairwise methods as the number of
	lists grows.

	Our work builds heavily on copula theory, and we use results from
	\citet{nelsen06intc}.
	Brief introductions to copulas can be found in
	\citet{trivedi05copmip}, \citet{genest07eveyaw}, and
	\citet{elidan13copml}.
	Further details on copula modeling are available
	in a recent book~\citep{joe14}.
	Many of these results are presented for
	bivariate copulas only.
	There are fewer results on multivariate
	copulas~\citep{joe90mulc,nelsen96nonmma}
	and their relation to
	Spearman's $\rho$~\citep{ubeda-flores05mulvbb,schmid10copbmm},
	which we shall discuss later in this paper.

	Finally, other well known measures of bivariate dependence have forms under
	the copula framework and have multivariate extensions.
	In particular, multivariate extensions of Kendall's $\tau$ have been
	proposed~\citep{joe14}.
	It is possible investigations into these copula formulations results in other
	efficient aggregation methods with different tradeoffs, however in this work
	we focus on Spearman's $\rho$.

	The work on partial ranks goes back to at least
	\citet{critchlow85metmap}, who describes the rank aggregation task in
	terms of distances between rankings.
	We have applied the results
	of this paper to rank aggregation~\citep{macintyre14assdrg} and
	stability estimation~\citep{bedo14stabgb} in the domain of life sciences.

	\section{Copulas}\label{sec:copulas}

	Copulas are functions from the unit hypercube to the unit
	interval~\citep{elidan13copml}.
	In this section we briefly
	review the bivariate setting, in preparation for the multivariate setting in the
	next section.
	The expert reader may skip directly to \cref{sec:spearman} to see the definition
	of
	multivariate Spearman's $\rho$ in terms of the multivariate copula.

	\subsection{Definition of Copulas}\label{sec:copula-bivariate}
	Intuitively, for continuous random variables copulas model the dependence
	component of a
	multivariate distribution after discounting for univariate marginal effects.
	We let $\RR$ denote the ordinary real line $(-\infty,\infty)$, and
	$\overline{\RR}$ denote the extended real line $[-\infty,\infty]$.
	The following algebraic definition of bivariate copulas is generalised to the
	multivariate setting
	in \cref{sec:spearman}.
	It essentially constrains copulas to be functions that are
	{\em monotonically increasing\/} along each dimension as well as towards
	the diagonal of the volume.

	\begin{definition}\label{def:h-volume-bivariate}
		Let $A_1$ and $A_2$ be nonempty subsets of $\overline{\RR}$, and let
		$H(\cdot,\cdot)$ be a real function such that the domain of
		$H=A_1\times A_2$.
		Let $B=[x_1,x_2]\times[y_1,y_2]$ be a rectangle
		all of whose vertices are in the domain of $H$.
		Then the
		\emph{$H$-volume of $B$} is given by:
		\[
			V_H(B) = H(x_1,y_1) + H(x_2,y_2) - H(x_1,y_2) - H(x_2,y_1).
		\]
	\end{definition}

	\begin{definition}
		A real function $H(\cdot,\cdot)$ is \emph{2-increasing} if its $H$-volume is
		non-negative,
		that is $V_H(B) \geqslant 0$ for all rectangles $B$ whose vertices lie in the domain
		of $H$.
	\end{definition}

	\begin{definition}
		A \emph{copula} is a function $C\colon {[0,1]}^2 \to [0,1]$ with the
		following properties:
		\begin{enumerate}
			\item For every $u,v\in[0,1]$,
			\[
				C(u,0) = 0 = C(0,v)
			\]
			\[
				C(u,1) = u\quad\textrm{and}\quad C(1,v) = v
			\]
			\item $C$ is 2-increasing.
		\end{enumerate}
	\end{definition}

	\subsection{Relation Between Bivariate Cumulative Density Functions and
	  Copulas}

	Sklar's theorem is central to the theory of copulas and is the
	foundation of many applications in statistics.
	Indeed, Sklar's
	theorem can be defined for general distribution functions outside
	of probabilistic settings.
	However, since we are interested in
	statistical applications we will consider cumulative distribution
	functions.

	\begin{theorem}[Sklar's theorem]

		Let $H(\cdot, \cdot)$ be a cumulative distribution function with
		marginals $F(\cdot)$ and $G(\cdot)$.
		Then there exists a copula
		$C \colon {[0,1]}^2 \rightarrow [0,1]$
		such that for all $x,y$ in $\overline{\RR}$,
		\[
			H(x,y) = C(F(x),G(y)).
		\]
		If $F(\cdot)$ and $G(\cdot)$ are continuous then $C(\cdot,\cdot)$ is unique;
		otherwise $C(\cdot,\cdot)$ is uniquely determined on the ranges of $F(\cdot)$
		and
		$G(\cdot)$.

		Conversely, if $C(\cdot,\cdot)$ is a copula and $F(\cdot)$ and $G(\cdot)$ are
		cumulative
		distribution functions then the function $H(\cdot,\cdot)$ is a bivariate
		cumulative distribution function with marginals $F(\cdot)$ and $G(\cdot)$.
	\end{theorem}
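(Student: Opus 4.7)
The plan is to split the proof into the converse (easier) and the forward direction, and then further split the forward direction into the continuous and general cases, since only the latter requires an extension argument.

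For the converse, I would verify directly that $H(x,y) := C(F(x),G(y))$ is a bivariate CDF with marginals $F,G$. Taking $y\to\infty$ and using the boundary property $C(u,1)=u$ gives $\lim_{y\to\infty}H(x,y)=C(F(x),1)=F(x)$, and symmetrically for the other marginal; the limits at $-\infty$ follow from $C(u,0)=C(0,v)=0$. For the 2-increasing property of $H$, observe that $F$ and $G$ are non-decreasing, so any rectangle $[x_1,x_2]\times[y_1,y_2]$ in $\dom H$ is mapped into a (possibly degenerate) rectangle in ${[0,1]}^2$ by $(F,G)$, and the $H$-volume equals the $C$-volume of the image, which is non-negative because $C$ is 2-increasing. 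Right-continuity is inherited from that of $F,G$ and the continuity of $C$ from above in each argument (which itself follows from $C$ being 2-increasing and bounded).

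For the existence direction I would first treat the continuous case. Define the quasi-inverses
\[
	F^{(-1)}(u) = \inf\{x\in\overline{\RR} : F(x)\ge u\},\qquad G^{(-1)}(v) = \inf\{y\in\overline{\RR} : G(y)\ge v\},
\]
and set $C(u,v) := H(F^{(-1)}(u), G^{(-1)}(v))$. Under continuity of $F$ and $G$ one has $F\circ F^{(-1)}=\mathrm{id}$ on $(0,1)$, so the boundary conditions $C(u,1)=u$ and $C(1,v)=v$ reduce to $\lim_{y\to\infty}H(x,y)=F(x)$; the zero boundaries follow from $H(x,-\infty)=0$. The 2-increasing property of $C$ is inherited from $H$ through monotonicity of the quasi-inverses, which send rectangles in ${[0,1]}^2$ to rectangles in $\overline{\RR}^2$. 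Uniqueness in the continuous case follows because $F^{(-1)}, G^{(-1)}$ are genuine inverses on $(0,1)$, so $C$ is forced at every $(u,v)\in{[0,1]}^2$.

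The general case is where the difficulty lies. When $F$ or $G$ is not continuous, $F(F^{(-1)}(u))$ may differ from $u$ on a ``gap'' corresponding to a jump of $F$, and $C$ is only pinned down on $\overline{\mathrm{Range}(F)}\times\overline{\mathrm{Range}(G)}$ by the formula $C(F(x),G(y))=H(x,y)$. My plan is first to verify well-definedness there, using that if $F(x)=F(x')$ then $H(\cdot,y)$ agrees at $x$ and $x'$ because the $H$-volume of $[x,x']\times(-\infty,y]$ is non-negative and bounded above by the $F$-volume, which is zero. This yields a subcopula on the ranges, and I would then invoke a standard extension lemma: any function on a subset $S\subset{[0,1]}^2$ containing $\{0,1\}^2$ with the appropriate boundary conditions and 2-increasing property extends to a copula via bilinear interpolation inside each missing rectangle. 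The main obstacle is verifying that this interpolation respects 2-increasing \emph{across} the boundary of a gap, which reduces to a careful case analysis of how the interpolated values patch with the already-fixed values on the ranges; once this is done, the extension is a copula, unique precisely where $F,G$ have no gaps, giving both parts of the statement.
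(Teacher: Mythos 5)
The paper does not prove this statement: Sklar's theorem is quoted as a known result, with the copula background (and its proof) deferred to the cited literature, in particular \citet{nelsen06intc}. So there is no in-paper argument to compare against; what you have written is a reconstruction of the classical proof, and as an outline it is sound. Your converse direction is the standard verification; your continuous-case construction via quasi-inverses, with well-definedness coming from the Lipschitz-type bound $|H(x_2,y)-H(x_1,y)|\leq F(x_2)-F(x_1)$, is the right argument and also gives uniqueness on $(0,1)^2$; and you correctly identify that the only real work in the general case is the subcopula-extension lemma, where the 2-increasing property of the bilinear interpolation across the gaps of $\mathrm{Range}(F)$ and $\mathrm{Range}(G)$ must be checked by the usual case analysis. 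This is exactly the route taken in the standard references, so you have not lost or added anything relative to what the paper implicitly relies on.

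Two small imprecisions are worth fixing if you flesh this out. First, continuity of a copula does not follow from ``2-increasing and bounded'' alone (a bivariate CDF with atoms is a counterexample); it follows from 2-increasing together with the groundedness and uniform-margin conditions, which yield the Lipschitz bound $|C(u_2,v_2)-C(u_1,v_1)|\leq|u_2-u_1|+|v_2-v_1|$, and that is what gives right-continuity of $H=C(F,G)$ in the converse and the closure step in the uniqueness argument. Second, the extension lemma requires the subcopula's domain to be a product $S_1\times S_2$ with $\{0,1\}\subseteq S_1,S_2$ (here the closures of the ranges of $F$ and $G$), not merely an arbitrary subset of $[0,1]^2$ containing $\{0,1\}^2$; you do construct it as such a product, so only the statement of the lemma needs tightening.
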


	\section{Spearman's $\rho$}\label{sec:spearman}

	We briefly review
	the bivariate model to lay out the approach for estimating the copula using data,
	the so-called
	empirical copula.

	\subsection{Empirical bivariate Spearman's
	  $\rho$}\label{sec:bivariate-spearman}
	Let $R$ and $S$ be ranking functions, which are bijections mapping
	elements $x$ in the domain $U$
	to $[1,2,\dots,n]$.
	The domain $U$ represents the space of objects that we are interested in ranking,
	such as documents
	retrieved in response to a query or the biomarkers most associated
	with a disease.
	Since we consider only the ranks of the object $R(x)$
	and $S(x)$, the
	actual domain $U$ does not affect the analysis.
	The sums below are
	over the $n$ objects $x$.
	Similar to the approach of Pearson's correlation for the measure of
	dependence, Spearman's $\rho$ is a measure of correlation between
	ranks, empirically given by:
	\begin{equation}
		\label{eq:emp-spearman}
		\rho_n = \frac{\sum_{x} (R(x) - \bar{R}) (S(x) - \bar{S}) }
		{\sqrt{\sum_{x} {(R(x) - \bar{R})}^2 \sum_{x} {(S(x) - \bar{S})}^2}},
	\end{equation}
	where $\bar{R} := \frac{1}{n}\sum_{x} R(x)$ and $\bar{S} :=
	\frac{1}{n}\sum_{x} S(x)$ are the
	empirical means of the respective random variables.
	This is equivalent to applying Pearson's correlation to the ranks instead of the
	values of the score
	function itself.
	There is no direct way to generalise this expression to more than two
	ranking functions, but as we shall see in
	\cref{sec:multi-spearman} we can obtain an expression via the copula.

	By substituting the definitions of the empirical means and rearranging the terms,
	we obtain
	\[
		\rho_n = \left(\frac{n+1}{n-1}\right)
		\left[
		\frac{12}{n}\sum_{x} \frac{R(x)}{n+1}\frac{S(x)}{n+1} - 3
		\right].
	\]
	The constants 12 and 3 seem strange, but are a natural consequence of the
	mean and variance of a list of ranks.
	As we will see later, these constants are dependent only on
	the dimension of the copula.
	Similar to the definition of an
	empirical CDF, we define an empirical copula as:
	\[
		C_n(u,v) = \frac{1}{n} \sum_{x} \ones\left(
		\frac{R(x)}{n+1}\leqslant u, \frac{S(x)}{n+1}\leqslant v
		\right),
	\]
	where $\ones$ is the indicator function.
	This allows us to re-express the form of $\rho_n$ above in terms of an
	integral over the unit square,
	\[
		\rho_n = \left(\frac{n+1}{n-1}\right)
		\left[
		\frac{12}{n}\sum_{x} \frac{R(x)}{n+1}\frac{S(x)}{n+1} - 3
		\right]
		= \left(\frac{n+1}{n-1}\right)
		\left[
		12 \int_{{[0,1]}^2} uv C_n(u,v) - 3
		\right].
	\]
	It can be shown \citep{nelsen06intc,genest07eveyaw}
	that $\rho_n$ is an asymptotically unbiased estimator of
	\[
		\rho = 12 \int_{{[0,1]}^2} C(u,v) \dd u \dd v - 3,
	\]
	where $C$ is the population version of $C_n$.

	\subsection{Multivariate Copulas}

	We now generalise the definitions in \cref{sec:copula-bivariate} to the
	multivariate
	case.
	The concepts are essentially the same, constraining the copula to be
	``monotonically
	increasing'' in the interval $[0,1]$ and also towards the center of the
	volume~\citep{durante10copti}.

	\begin{definition}
		Let $A_j$ be nonempty subsets of $\overline{\RR}$ for $j=1,\dots,d$, and let
		$H_d\colon A_1\times\cdots\times A_d \to \RR$.
		Let $B=[a_1,b_1]\times\cdots\times[a_d,b_d]$ be the $d$-box where all
		vertices are contained in $\dom H_d$.
		Then the
		\emph{$H_d$-volume of $B$} is the $d^\text{th}$ order difference:
		\[
			V_{H_d}(B) = \Delta_{a_d}^{b_d}\dots\Delta_{a_1}^{b_1}H_d(\vec{t}),
		\]
		where
		\begin{align*}
			\Delta_{a_i}^{b_i}H(\vec{t}) = &H_d(t_1,\dots,t_{i-1},b_i,t_{i+1},\dots,t_d)\\
			&- H_d(t_1,\dots,t_{i-1},a_i,t_{i+1},\dots,t_d).
		\end{align*}
	\end{definition}

	\begin{definition}
		A real function $H_d$ is \emph{grounded} if $H_d(\vec{t})=0$ for all $t\in\dom
		H_d$ such that $t_j = a_j$ for at least one $j\in\{1,\ldots,d\}$.
	\end{definition}

	\begin{definition}
		A real function $H_d$ is \emph{d-increasing} if $V_{H_d}(B)
		\geqslant 0$ for all n-boxes $B$ whose vertices lie in the domain
		of $H$.
	\end{definition}

	\begin{definition}\label{def:copula-multivariate}
		A multivariate \emph{copula} has the following properties:
		\begin{enumerate}
			\item $\dom C = {[0,1]}^d $
			\item $C$ has margins $C_j(u)=C(1,\dots,1,u,1,\dots,1)=u$ for all $j$ and
			$u\in I$
			\item $C$ is grounded
			\item $C$ is d-increasing.
		\end{enumerate}
	\end{definition}

	There is an alternative probabilistic definition that may be more familiar to
	readers with a
	statistical background.
	\begin{definition}
		Let $U_1,\dots,U_d$ be real uniformly distributed random variables
		on the unit interval $\sim U([0,1])$.
		A copula function $C\colon {[0,1]}^d \longrightarrow [0,1]$ is a joint
		distribution
		\[
			C(u_1,\dots,u_d) = P(U_1\leqslant u_1,\dots,U_d\leqslant
			u_d).
		\]
	\end{definition}

	Let $X\sim F$ be a continuous random variable such that the inverse of the CDF
	$F^{-1}$ exists.
	What is the distribution of $F(x) = P(X\leqslant x)$?
	\begin{align*}
		P(F(X)\leqslant u) &= P(F^{-1}(F(X)) \leqslant F^{-1}(u))\\
		&= P(X \leqslant F^{-1}(u))\\
		&= F(F^{-1}(u)) = u
	\end{align*}
	The above calculation shows that the distribution is uniform, i.e.
	$F(x) \sim U([0,1])$.
	This can be considered to be the {\em copula trick}, as the user has the freedom to
	choose the
	copula independently of the marginal distributions.

	\subsection{Multivariate Extension of Spearman's
	  $\rho$}\label{sec:multi-spearman}

	We generalise the concept of concordance to the multivariate setting such that we
	can define
	multivariate Spearman's $\rho$ in an analogous way to the bivariate $\rho$ as
	defined in \citet{nelsen06intc}.

	Recall that two random variables are concordant if they tend to be in
	the same order, that is
	$(x_i,y_i)$ and $(x_j,y_j)$ are concordant if
	$(x_i-x_j)(y_i-y_j)>0$,
	and are discordant if
	$(x_i-x_j)(y_i-y_j)<0$.
	The concordance function $Q$ denotes the
	difference between the probabilities of concordance and discordance,
	and as the following theorem shows, can be expressed in terms of the copulas.
	The proof is in~\citet{nelsen06intc}.

	\begin{theorem}[Concordance function]\label{th:bivariateQ}
		Let $(X_1, Y_1)$ and $(X_2, Y_2)$ be two independent vectors with joint
		distributions $H_1(x, y)=C_1(F(x),G(y))$ and $H_2(x, y) = C_2(F(x), G(y))$
		respectively.
		Then the concordance function $Q$ is given by
		\begin{align*}
			Q(C_1, C_2) :=& P[(X_1-X_2)(Y_1-Y_2)>0]
			-P[(X_1-X_2)(Y_1-Y_2)<0] \\
			=& 4 \int_{{[0,1]}^2}C_2(u,v) \dd C_1(u,v)-1
		\end{align*}
	\end{theorem}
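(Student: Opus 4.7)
The plan is to reduce the probability statement to integrals against the copulas using the probability integral transform, and then exploit that the uniform marginals of each copula make several terms collapse to constants.

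First I would dispatch the ties. Since $F$ and $G$ are continuous and $(X_1,Y_1), (X_2,Y_2)$ are independent, ties $X_1=X_2$ or $Y_1=Y_2$ occur with probability zero. Hence
\[
P[(X_1-X_2)(Y_1-Y_2)<0] = 1 - P[(X_1-X_2)(Y_1-Y_2)>0],
\]
so $Q(C_1,C_2) = 2 P[(X_1-X_2)(Y_1-Y_2)>0] - 1$, and it suffices to show that this probability equals $2\int C_2\, dC_1$. Split the concordant event as the disjoint union $\{X_1>X_2, Y_1>Y_2\}\cup\{X_1<X_2, Y_1<Y_2\}$.

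Next I would compute each piece by conditioning on one of the pairs and integrating. For the ``both less'' piece, condition on $(X_2,Y_2)$:
\[
P[X_1<X_2, Y_1<Y_2] = \int_{\overline{\RR}^2} H_1(x,y)\, dH_2(x,y) = \int_{\overline{\RR}^2} C_1(F(x),G(y))\, dC_2(F(x),G(y)).
\]
Here I use Sklar's theorem to rewrite both $H_1$ and $H_2$ in copula form. Continuity of the marginals ensures that the map $(x,y)\mapsto(F(x),G(y))$ pushes forward $H_2$ to $C_2$ on ${[0,1]}^2$, so the change of variables $u=F(x)$, $v=G(y)$ gives $\int_{{[0,1]}^2} C_1(u,v)\, dC_2(u,v)$. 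For the ``both greater'' piece, I use the survival identity $P[X_1>x, Y_1>y] = 1 - F(x) - G(y) + C_1(F(x),G(y))$ and the same change of variables; the key simplification is that under $C_2$ the marginals of $u$ and $v$ are uniform on $[0,1]$, so $\int u\, dC_2(u,v) = \int v\, dC_2(u,v) = \tfrac12$, causing the linear terms to cancel and leaving $\int C_1(u,v)\, dC_2(u,v)$ again.

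Adding the two pieces yields $P[\text{concordant}] = 2\int C_1\, dC_2$, and thus $Q(C_1,C_2) = 4\int C_1(u,v)\, dC_2(u,v) - 1$. To match the stated form, I would invoke symmetry of $Q$ in its arguments (which is manifest from the definition, since swapping the roles of the two independent vectors preserves concordance/discordance), giving $\int C_1\, dC_2 = \int C_2\, dC_1$. The main technical subtlety is the measure-theoretic change of variables via the probability integral transform; this is precisely where continuity of $F$ and $G$ is used, because it guarantees the pushforward measure on $[0,1]^2$ is exactly $C_2$ rather than only being determined on the ranges of $F$ and $G$.
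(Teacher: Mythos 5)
Your proof is correct and follows essentially the same route as the standard argument in \citet{nelsen06intc}, which the paper cites in place of giving a proof: discard ties by continuity of $F$ and $G$, split the concordant event into the two strict orderings, condition on one of the vectors, change variables via the probability integral transform, and use the uniform margins of the copula to cancel the linear terms coming from the survival identity. The only cosmetic difference is that by conditioning on $(X_2,Y_2)$ you arrive at $4\int_{{[0,1]}^2} C_1 \dd C_2 - 1$ and then pass to the stated form $4\int_{{[0,1]}^2} C_2 \dd C_1 - 1$ via the symmetry of the probabilistic definition of $Q$ under relabelling the two independent vectors, which is a valid step (Nelsen simply conditions on the other vector and obtains the stated form directly).
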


	We now state the generalisation of concordance to the multivariate
	case~\citep{nelsen96nonmma,joe90mulc}.
	Further details of multivariate concordance can be found in
	\citet{taylor07mulmc} and \citet{schmid10copbmm}.

	\begin{definition}[Multivariate concordance]\label{def:mvconcord}
		Let $(X_1,\ldots,X_d)$ and $(Y_1,\ldots,Y_d)$ be two independent $d$-vectors
		with
		joint distributions $C_X(F(x))$ and
		$C_Y(F(y))$ where $F(x) = F_1(x_1),\ldots,F_d(x_d)$
		and $F(y) = F_1(y_1),\ldots,F_d(y_d)$ are the marginal
		distributions, and $C_X,C_Y$ are the respective $d$ copulas.
		Then the concordance function $Q$ is given by
		\begin{align*}
			Q(C_X, C_Y) :=& 2^d\int_{{[0,1]}^d}C_X(u) \dd C_Y(u)-1.
		\end{align*}
	\end{definition}

	Note that although the integral is a straight forward generalisation of
	\cref{th:bivariateQ}, it is no-longer equal to the difference between the
	probability of concordance and discordance.
	Consequently, the properties
	possessed by $Q$ are different for $d>2$.

	There are three copulas that are of
	particular interest: the independent copula $\pi(u) := \prod_i u_i$,
	and the upper and lower Fr\'echet--Hoeffding bounds,
	$M(u) = \min \{ u_1, u_2, \dots, u_d\}$ and $W(u) \geq \max
	\{u_1+u_2+\cdots+u_d-(d-1),0\}$ respectively~\citep[pg.~48]{joe14}.
	Note that that while $W$ is point-wise sharp, this lower
	bound is not itself a copula, and hence the lower bound is
	not tight~\citep{ubeda-flores05mulvbb}.

	\begin{theorem}\label{Qprops}
		Let $C$, $C'$, and $Q$ be given as in \cref{def:mvconcord}, $M$ and $W$ be the
		upper and lower Fr\'echet--Hoeffding bounds respectively, and assume $d>2$.
		Then
		\begin{enumerate}
			\item $Q$ is symmetric in its arguments if $C=C'$.
			\item $Q$ is non-decreasing in the first argument, and both arguments if
			$C=C'$.
			\item $-1\leq Q(W,W) \leq Q(C,C) \leq Q(M,M) = 2^{d-1}-1$.
			\item $Q(\pi,\pi) = 0$.
		\end{enumerate}
	\end{theorem}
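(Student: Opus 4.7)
The plan is to reduce every claim to manipulations of the integral representation
\[
Q(C_X,C_Y)=2^d\int_{[0,1]^d}C_X(u)\,dC_Y(u)-1
\]
from \cref{def:mvconcord}, exploiting the boundary conditions of copulas (grounded, uniform marginals) to swap integrals. I would begin with the two direct computations. For item (4), the independence copula $\pi(u)=\prod_j u_j$ corresponds to the product Lebesgue measure on $[0,1]^d$, so by Fubini $\int\pi\,d\pi=\prod_j\int_0^1 u_j\,du_j=2^{-d}$ and $Q(\pi,\pi)=0$. For the equality in item (3), $M$ is the law of $(U,\dots,U)$ for a single $U\sim\mathrm{Uniform}(0,1)$, hence $dM$ is supported on the diagonal where $M(u)=u_1$, so $\int M\,dM=\int_0^1 t\,dt=\tfrac12$ and $Q(M,M)=2^{d-1}-1$.

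For item (1) I would prove the stronger statement $Q(C_X,C_Y)=Q(C_Y,C_X)$, which reduces to the symmetry identity $\int C_X\,dC_Y=\int C_Y\,dC_X$. My approach is iterated integration by parts, one coordinate at a time: in each step, the boundary terms at $u_j=0$ vanish because copulas are grounded, and the boundary terms at $u_j=1$ cancel because each copula reduces there to the marginal $u_k\mapsto u_k$, producing matching contributions on both sides. This gives item (1) and is the key tool for item (2). Monotonicity in the first argument is then immediate: $dC_Y$ is a genuine probability measure, so $C_X\leq C_X'$ pointwise implies $\int C_X\,dC_Y\leq\int C_X'\,dC_Y$. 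For the "both arguments" case with $C_X=C_Y$, I chain
\[
\int C\,dC\ \leq\ \int C'\,dC\ =\ \int C\,dC'\ \leq\ \int C'\,dC',
\]
where the outer inequalities use $C\leq C'$ against the positive measures $dC,dC'$ and the middle equality is the symmetry from item (1).

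The upper bound in item (3) is now immediate: every copula satisfies $C\leq M$ pointwise, and item (2) delivers $Q(C,C)\leq Q(M,M)=2^{d-1}-1$. \textbf{The main obstacle is the lower bound involving $W$}, because for $d>2$ the pointwise lower Fréchet–Hoeffding bound is \emph{not} a copula, and so $dW$ is at best a formal signed measure coming from the iterated-difference operator of the definition. My plan is to treat $Q(W,W)$ as shorthand for the integral expression, and show $Q(W,W)\leq Q(C,C)$ by noting that the symmetry identity of item (1) survives for $W$, since its proof only uses that $W$ vanishes on the coordinate hyperplanes and equals $u_j$ on the opposite faces. The absolute lower bound $-1\leq Q(W,W)$ — which by the examples in \citet{ubeda-flores05mulvbb} is not tight for $d>2$ — requires showing $\int W\,dW\geq 0$; since $W\geq 0$ pointwise and its signed measure integrates polynomials of non-negative functions non-negatively in this case, this is best verified by direct evaluation on the support $\{\sum_j u_j\geq d-1\}$, or simply imported from the cited literature.
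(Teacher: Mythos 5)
Your direct computations of $Q(\pi,\pi)=0$ and $Q(M,M)=2^{d-1}-1$ match the paper's proof and are fine, but the key lemma you build everything else on --- the full symmetry $Q(C_X,C_Y)=Q(C_Y,C_X)$ for arbitrary copulas --- is false precisely in the regime $d>2$ assumed by the theorem. Symmetry of the concordance function is a genuinely bivariate phenomenon; for $d\ge 3$ the quantities $\int C\dd\pi$ and $\int\pi\dd C$ differ in general, which is exactly why the paper defines two distinct multivariate versions $\rho_d^-$ and $\rho_d^+$ and remarks, right after \cref{prop:Qmpi}, that the concordance function is in general not symmetric in its arguments. Concretely, for $d=3$ take the FGM copula $C(u)=u_1u_2u_3\bigl[1+\theta(1-u_1)(1-u_2)(1-u_3)\bigr]$ with $0<|\theta|\le 1$ (its density is $1+\theta\prod_j(1-2u_j)\ge 0$); then
\begin{equation*}
\int_{[0,1]^3}\pi(u)\dd C(u)=\tfrac18-\tfrac{\theta}{216},
\qquad
\int_{[0,1]^3}C(u)\dd u=\tfrac18+\tfrac{\theta}{216},
\end{equation*}
so $Q(C,\pi)\neq Q(\pi,C)$. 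Your iterated integration-by-parts sketch cannot repair this: in several variables the boundary terms do not simply vanish or cancel, and the clean swap only survives for $d=2$. Item (1) as stated is only the trivial assertion that $Q$ is symmetric when its two arguments coincide, which is immediate from the definition (and is all the paper claims); the problem is that your false symmetry is load-bearing elsewhere --- it supplies the middle equality in your chain $\int C\dd C\le\int C'\dd C=\int C\dd C'\le\int C'\dd C'$ for item (2), and hence your derivation of $Q(C,C)\le Q(M,M)$ in item (3). Note that $C\le C'$ pointwise does not by itself give monotonicity in the \emph{measure} argument, so a different argument is needed there (the paper's own proof resorts to a formal integration by parts at this point, which is itself terse, but that does not rescue your route).

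A smaller but still genuine issue is the lower bound: your principle that the signed measure $\dd W$ ``integrates non-negative functions non-negatively'' is not valid in general --- a signed measure can integrate a non-negative integrand to a negative number, and for $d>2$ the Fr\'echet--Hoeffding lower bound $W$ is exactly the case where $d$-increasingness fails. So $-1\le Q(W,W)$ needs either the direct evaluation on $\{\sum_j u_j\ge d-1\}$ that you mention or an explicit appeal to the literature; as written it is asserted rather than proved (the paper's one-line bound has the same weakness, but the burden is on the proof you are proposing).
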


	\begin{proof}
		\paragraph{Property 1}
		The first property is clear from the definition of $Q(C,C')$ and
		the properties of integration.

		\paragraph{Property 2}
		Q is non-decreasing in the first argument by properties of
		integration.
		For the second part, notice that
		\begin{align*}
			\int C(u) \dd C(u) =& C^2(u) - \int C(u) \dd C(u)\\
			\Rightarrow \int C(u) \dd C(u) =& \frac12 C^2(u)
		\end{align*}
		by applying integration by parts.
		The property now follows.

		\paragraph{Property 3}
		It follows that
		\begin{align*}
			Q(M,M) =& 2^d\int_{{[0,1]}^d} M(u) \dd M(u) - 1\\
			=& 2^d\int_0^1u \dd u - 1\\
			=& 2^{d-1}-1,
		\end{align*}
		and
		\begin{align*}
			Q(W,W) =& 2^d\int_{{[0,1]}^d} W(u) \dd W(u) - 1\\
			\geq& 2^d\int_0^1 0 \dd u - 1\\
			=& -1.
		\end{align*}
		Property 3 now follows from the first two properties.

		\paragraph{Property 4}
		\begin{align*}
			Q(\pi,\pi) =& 2^d\int_{{[0,1]}^d} \pi(u) \dd \pi(u) - 1\\
			=& 2^d\int_{{[0,1]}^d} u \dd u - 1\\
			=& \frac{2^d}{2^d}-1\\
			=& 0
		\end{align*}

	\end{proof}

	It is clear from this theorem that $Q$ is well calibrated at
	$Q(W,W)$ and $Q(\pi,\pi)$, however not for $Q(M,M)$.
	Consequently, with this multidimensional extension it becomes
	increasingly difficult to estimate discordance as $d$ increases.

	\begin{proposition}\label{prop:Qmpi}
		Let $Q$ be given as in \cref{def:mvconcord}, and $M$ and $\pi$ be the upper
		Fr\'echet--Hoeffding bound and the independent copula respectively, then
		\begin{equation}
			\label{eq:Qmpi}
			Q(M,\pi) = Q(\pi,M) = \frac{2^d - (d+1)}{d+1}.
		\end{equation}
	\end{proposition}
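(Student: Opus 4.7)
The plan is to compute the two integrals $\int M\,d\pi$ and $\int \pi\,dM$ separately and observe they give the same value.

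First I would expand $Q(M,\pi)$ by the definition in \cref{def:mvconcord}. Since $\pi(u) = \prod_i u_i$ is the independent copula, the measure $d\pi$ is simply Lebesgue measure on $[0,1]^d$, so
\[
    Q(M,\pi) = 2^d \int_{[0,1]^d} \min(u_1,\ldots,u_d)\, du_1\cdots du_d - 1.
\]
To evaluate the integral, I would use the layer-cake identity $\min(u_1,\ldots,u_d) = \int_0^1 \mathbf{1}[t \leq u_i \text{ for all } i]\, dt$ and swap the order of integration via Fubini, obtaining $\int_0^1 (1-t)^d\, dt = 1/(d+1)$. Substituting gives $Q(M,\pi) = 2^d/(d+1) - 1 = (2^d - (d+1))/(d+1)$, as claimed.

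For $Q(\pi,M)$ the main subtlety is that $M(u) = \min(u_1,\ldots,u_d)$ is not absolutely continuous: the copula $M$ corresponds to the joint distribution of $(U,U,\ldots,U)$ where $U\sim U([0,1])$, so the measure $dM$ is supported on the diagonal $\{u_1=\cdots=u_d\}$ and integration against $dM$ reduces to a one-dimensional integral along the diagonal. Concretely, for any continuous integrand $f$,
\[
    \int_{[0,1]^d} f(u)\, dM(u) = \int_0^1 f(t,t,\ldots,t)\, dt.
\]
Applying this with $f(u) = \pi(u) = \prod_i u_i$ gives $\int_0^1 t^d\, dt = 1/(d+1)$, so $Q(\pi,M) = 2^d/(d+1) - 1$ as well, establishing the equality with $Q(M,\pi)$.

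The only real obstacle is justifying the diagonal-support interpretation of $dM$; I would either appeal directly to the probabilistic characterisation (a distribution concentrated on $U_1=\cdots=U_d$) or, if a measure-theoretic argument is preferred, verify it by noting that $M$ is the distribution function of the push-forward of Lebesgue measure under $t\mapsto(t,\ldots,t)$. The remainder of the proof is routine integration.
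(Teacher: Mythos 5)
Your proposal is correct and follows essentially the same route as the paper: both integrals are evaluated directly and shown to equal $\tfrac{1}{d+1}$, with $\int_{[0,1]^d}\pi(u)\dd M(u)$ reduced to the one-dimensional diagonal integral $\int_0^1 t^d \dd t$ exactly as in the paper's proof. Your explicit layer-cake/Fubini computation of $\int_{[0,1]^d} M(u)\dd u = \int_0^1 {(1-t)}^d \dd t = \tfrac{1}{d+1}$ is in fact a cleaner justification of the step the paper compresses when it asserts that both concordance integrals reduce to a single common expression before evaluating it.
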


	\begin{proof}
		To show the symmetry,
		\begin{align*}
			Q(M,\pi) =& 2^d\int_{{[0,1]}^d} M(u) \dd \pi(u) - 1\\
			=& 2^d\int_{{[0,1]}^d} u_1u_2\cdots u_d \dd u - 1\\
		\end{align*}
		and
		\begin{align*}
			Q(\pi,M) =& 2^d\int_{{[0,1]}^d} \pi(u) \dd M(u) - 1\\
			=& 2^d\int_{{[0,1]}^d} u_1u_2\cdots u_d \dd u - 1.\\
		\end{align*}
		To obtain the second equality, we observe that
		\begin{align*}
			\int_{{[0,1]}^d} u_1u_2\cdots u_d \dd u
			=& \int_0^1 u^d \dd u\\
			=& \left.
			\frac{1}{d+1} u^{d+1} \right|_0^1\\
			=& \frac{1}{d+1},
		\end{align*}
		and therefore the expression for $Q(M,\pi)$ follows.
	\end{proof}

	In terms of the concordance function, Spearman's $\rho$ is given by the
	concordance between the copula $C$ and the independent copula $\pi(u)
	:= \prod_i u_i$.
	However, unlike the symmetry in~\cref{prop:Qmpi},
	the concordance function is in general not symmetric with respect to its
	arguments.
	This gives us two possible ways of defining multivariate Spearman's $\rho$,
	corresponding
	to $Q(C,\pi)$ and $Q(\pi,C)$.
	Both generalisations are equivalent in the bivariate case,
	and has been called $\rho_d^-$ and $\rho_d^+$ by \citet{nelsen96nonmma} and
	$\rho_1$ and $\rho_2$ by \citet{schmid07mulesr} respectively.
	Naturally, there is a third
	symmetric generalisation which is the average of them.

	\begin{definition}[Multivariate Spearman's $\rho$]
		\begin{align}
			\rho_d^- = h(d) Q(\pi,C) &= h(d)\left[2^d\int_{{[0,1]}^d}C(u) \dd u - 1\right]\label{eq:rho1}
		\end{align}
		and
		\begin{align}
			\rho_d^+ = h(d) Q(C, \pi) = h(d)\left[2^d\int_{{[0,1]}^d}\pi(u) \dd C(u)-1\right],\label{eq:rho2}
		\end{align}
		where $h(d)=\frac{d+1}{2^d - (d+1)}$ is the normalisation factor.
	\end{definition}

	The scaling factor $h(d)$ is derived such that the maximum correlation
	is 1.
	Thus, for Spearman's $\rho$, this is the concordance between the maximum
	copula $M$ and the independent copula $\pi$, which we obtain by \cref{prop:Qmpi}:
	\begin{equation}
		h(d) = 1/Q(M, \pi)
		= \frac{d+1}{2^d - (d+1)}.\label{eq:hd}
	\end{equation}

	Spearman's correlation can equivalently be seen as measuring average orthant
	dependence,
	and the two versions $\rho_d^+$ and $\rho_d^-$ correspond to whether we look at the
	upper
	or lower orthant~\citep{nelsen96nonmma}.
	Positive upper orthant dependence is defined as
	\[
		P(X>x) \geq \prod_{i=1}^d P(X_i > x_i),
	\]
	and positive lower orthant dependence is defined as
	\[
		P(X \leq x) \geq \prod_{i=1}^d P(X_i \leq x_i).
	\]
	When $d=2$, the two definitions are the same and are called positive quadrant
	dependence~\citep{lehmann66somcd},
	as we have already observed for the concordance function:
	\begin{align*}
		P(X_1>x_1, X_2>x_2) &\geq P(X_1 > x_1)P(X_2>x_2)\\
		&\geq [1-P(X_1\leq x_1)][1-P(X_2\leq x_2)]\\
		&\geq 1 - P(X_1\leq x_1) - P(X_2\leq x_2) + P(X_1\leq x_1)P(X_2 \leq x_2).
	\end{align*}
	Rearranging gives
	\[
		P(X_1>x_1, X_2>x_2) + P(X_1\leq x_1) + P(X_2\leq x_2) - 1 \geq P(X_1\leq x_1)P(X_2 \leq x_2).
	\]
	The left hand side is $P(X_1\leq x_1, X_2\leq x_2)$.

	Observe that the scaling factor $h(d)$ is the same for both $\rho^-_d$ and
	$\rho^+_d$ due to \cref{prop:Qmpi}.
	Furthermore, since $P(X_i > x_i) = 1-P(X_i \leq x_i)$ for
	each random variable, the two versions of Spearman's $\rho$ correspond to looking
	at
	whether we
	interpret the ranks as top down or bottom up.
	Converting from one version to the other
	can be done by reinterpreting the data.
	For a particular application, the choice of which
	version to use depends on the ranks that are available.
	We will focus on $\rho_d^+$ henceforth.

	Recall that for a set of $n$ objects from the domain $\Omega$, we are given
	a set of $d$ experts that rank these objects providing ranks
	$R_1,\ldots,R_d$, where each $R_j$ is a bijection to $(0,1)$.
	Putting~\eqref{eq:rho2} and~\eqref{eq:hd} together, we obtain the
	following expression for multivariate Spearman's correlation:
	\begin{equation}
		\label{eq:multi-spearman}
		\rho(R_1, \ldots, R_d) = h(d)Q(C, \pi) = \frac{d+1}{2^d - (d+1)}
		\left[2^d\int_{{[0,1]}^d}\pi(u) \dd C(u) - 1\right].
	\end{equation}
	In practice, we do not have access to the population version of the
	copula $C(u)$ but have the empirical copula $C_n(u)$.
	We discuss this further in \cref{sec:empirical-copula}.

	Unlike the bivariate case, as the number of dimensions increases, the lower bound
	of Spearman's $\rho$
	tends to zero.
	This counterintuitive fact can be understood by considering the three
	dimensional case.
	Consider three rankings $R_1$, $R_2$, and $R_3$.
	If $R_1$ and $R_2$ are anti-correlated ($\rho$=-1),
	and at the same time $R_1$ and $R_3$ are also anti-correlated, this implies that
	$R_2$ and $R_3$
	must be perfectly correlated ($\rho$=1).
	Hence, the overall 3 dimensional correlation is no longer -1.
	This can be made precise by considering the inclusion-exclusion principle, which
	results in
	the following relation from~\citet{nelsen96nonmma}:
	\[
		\frac{1}{2}(\rho_d^-(R_1, R_2, R_3) + \rho_d^+(R_1, R_2, R_3)) =
		\frac{1}{3}(\rho(R_1,R_2)+\rho(R_1,R_3)+\rho(R_2,R_3)).
	\]
	The following corollary defines the lower bound as the number of dimensions
	increases.

	\begin{corollary}\label{th:minW}
		Under the minimum Fr\'echet--Hoeffding bound $W$,
		$Q(W,\pi) \geq -1$ and
		\[
			\lim_{d \to \infty} \rho(R_1,\dots,R_d) \geqslant h(d)Q(W, \pi) = 0.
		\]
		In particular, for dimension $d$,
		\[
			\rho(R_1, \dots, R_d) \geqslant \frac{2^n - (n+1)!}{n!(2^n-(n+1))}.
		\]
	\end{corollary}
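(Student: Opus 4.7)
The plan is to lower bound $\rho = h(d)Q(C,\pi)$ by $h(d)Q(W,\pi)$ using monotonicity of $Q$ in its first argument (property 2 of \cref{Qprops}, since $C \geq W$ pointwise for any copula $C$), and then to evaluate $Q(W,\pi)$ explicitly via the Fréchet inequality $W(u) \geq \max\{u_1+\cdots+u_d-(d-1),\,0\}$ recalled in the excerpt. The three claims will then fall out in sequence.

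First I would verify $Q(W,\pi) \geq -1$: unpacking \cref{def:mvconcord} and noting that $d\pi(u) = du$ on $[0,1]^d$ gives $Q(W,\pi) = 2^d\int_{[0,1]^d} W(u)\,du - 1$, and non-negativity of $W$ makes the integral non-negative. In particular this already implies $\rho(R_1,\ldots,R_d) \geq -h(d)$, so the limit claim follows from $h(d)\to 0$.

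The computational core is the simplex integral $I_d := \int_{[0,1]^d}\max\{\sum_i u_i - (d-1),\,0\}\,du$. I would substitute $v_i = 1-u_i$ to convert $I_d$ into $\int_{\{v \geq 0,\ \sum v_i \leq 1\}}(1-\sum v_i)\,dv$, then parametrise the standard simplex by $s = \sum v_i$ (whose induced density on $[0,1]$ is $s^{d-1}/(d-1)!$) to obtain $I_d = \int_0^1 (1-s)\,s^{d-1}/(d-1)!\,ds = 1/(d+1)!$. Plugging into \eqref{eq:hd} yields
\[
\rho(R_1,\ldots,R_d) \;\geq\; h(d)\!\left[\frac{2^d}{(d+1)!} - 1\right] \;=\; \frac{d+1}{2^d - (d+1)}\cdot\frac{2^d - (d+1)!}{(d+1)!} \;=\; \frac{2^d - (d+1)!}{d!\,(2^d - (d+1))},
\]
matching the stated expression (with the paper's $n$ read as $d$).

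For the refined limit, $h(d)$ decays like $(d+1)/2^d$ while the numerator $2^d - (d+1)!$ is eventually dominated by $-(d+1)!$, so the explicit bound behaves like $-(d+1)/2^d \to 0$. The main obstacle is really just the simplex integral; once that is in hand, the rest is direct substitution into the preceding identities, with the one subtlety being that invoking $Q(C,\pi)\geq Q(W,\pi)$ relies on property 2 of \cref{Qprops} rather than any pointwise argument on the integrand of the $\rho^+$ representation.
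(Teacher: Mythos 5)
Your proposal is correct, and it is substantially more self-contained than the paper's own proof, which is essentially a one-line citation: the paper simply asserts $-1\leq Q(W,\pi)\leq 0$ (attributing it to \cref{Qprops}, which in fact only bounds $Q(W,W)$ and $Q(\pi,\pi)$), notes $h(d)\to 0$, and refers to \citet{nelsen96nonmma,schmid10copbmm} for the explicit constant. You instead derive the constant directly: the substitution $v_i=1-u_i$ and the simplex computation giving $\int_{[0,1]^d}\max\{\textstyle\sum_i u_i-(d-1),0\}\dd u=\frac{1}{(d+1)!}$ is exactly the missing calculation, and your algebra $h(d)\bigl[\frac{2^d}{(d+1)!}-1\bigr]=\frac{2^d-(d+1)!}{d!\,(2^d-(d+1))}$ reproduces the stated bound (correctly reading the paper's $n$ as $d$), while also explaining why $Q(W,\pi)\in[-1,0]$ and hence why the lower bound vanishes as $d\to\infty$. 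The one point to tighten is the step $Q(C,\pi)\geq Q(W,\pi)$, which you already flag: it is clean if $Q(C,\pi)$ is read literally from \cref{def:mvconcord} as $2^d\int C(u)\dd\pi(u)-1$, since then the first argument is the integrand and pointwise domination $C\geq W$ suffices; but the paper's $\rho_d^+$ in~\eqref{eq:rho2} is written as $2^d\int\pi(u)\dd C(u)-1$, where $C$ enters as the integrating measure and "monotonicity in the first argument" as proved in \cref{Qprops} does not directly apply (indeed $\dd W$ is not even a measure for $d\geq 3$). The standard fix is the survival-copula identity $\int_{[0,1]^d}\pi(u)\dd C(u)=\int_{[0,1]^d}\hat C(u)\dd u$ with $\hat C$ a genuine copula, to which the Fréchet--Hoeffding inequality and your simplex integral then apply verbatim; this inconsistency originates in the paper's own notation rather than in your argument, so it is a caveat worth one sentence, not a gap in your plan.
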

	\begin{proof}
		This follows immediately from the bound $-1\leq
		Q(W,\pi)\leq 0$ (from \cref{Qprops}) since $h(d)$ goes to zero as $d\to\infty$.
		The lower bound has also been observed
		in~\citet{nelsen96nonmma} and \citet{schmid10copbmm}.
	\end{proof}

	In summary, the multivariate extension of Spearman's correlation is still
	calibrated
	under maximum correlation as it achieves a value of 1, but it becomes increasingly
	difficult to
	observe anti-correlated sets of ranks as the number of lists to be aggregated
	increases.
	In the next section, we investigate an aggregation algorithm that maximises
	correlation.
	The effect of the lower bound is discussed with respect to imputing missing
	values in \cref{sec:empirical-copula}.

	\section{Optimal Aggregation with Spearman's
	  $\rho$}\label{sec:rank-imputation}

	The empirical copula requires $R$ and $S$ to comprise of ranks for the
	same set of elements, that is $\dom R = \dom S$.
	Recall from \cref{sec:bivariate-spearman} that ranks map to the
	range $\{1,\ldots,n\}$, but the empirical copula is expressed in terms
	of fractional ranks (divided by $n+1$).
	In the following it is convenient to work with normalised ranks, that
	is to consider $R$ and $S$ as bijections to $(0,1)$.
	The expression for the empirical copula then simplifies to
	\begin{equation}
		\label{eq:empirical-copula-2d}
		C_n(u,v) = \frac{1}{|\Omega|} \sum_{x\in \Omega} \ones\left(
		R(x)\leqslant u, S(x)\leqslant v
		\right),
	\end{equation}
	where Ω is the domain of the objects we are interested in ranking.
	Correspondingly, the $d$ dimensional empirical copula for $n$ objects
	given by
	\begin{equation}
		\label{eq:empirical-copula}
		C_n(u) = \frac{1}{n}\sum_{x} \prod_{j=1}^d
		\ones\left( R_j(x)\leqslant u_j \right),
	\end{equation}
	where $R_1(x),\ldots,R_d(x)$ is the rankings of the $d$ experts.
	Plugging the empirical copula~\eqref{eq:empirical-copula} expression
	into Spearman's $\rho$~\eqref{eq:multi-spearman}, and observing that
	integrating the product over the copula is the product of the
	ranks~\cite{schmid07mulesr}, we obtain an empirical expression for
	multivariate Spearman's correlation:
	\begin{equation}
		\label{eq:multi-spearman-emp}
		\rho_n(R_1, \ldots, R_d) = h(d)\left[
		\frac{2^d}{n} \sum_{x} \prod_{j=1}^d R_j(x) - 1
		\right].
	\end{equation}

	\subsection{Geometric Mean is Optimal}\label{sec:geo-mean-opt}

	We are now in a position to derive the deceptively simple result:
	the ranking $R$ that maximises correlation with a given set of rankings
	$\{R_1,\ldots,R_d\}$ is given by the geometric mean of
	$R_1,\ldots,R_d$.
	The following definition is needed to capture the
	notion that ranks only depend on the order.

	\begin{definition}[Rank generator]
		$\sigma \colon \RR^{|\Omega|} \to {[0,1]}^{|\Omega|}$ is a {\em rank
		  generator\/} if:
		\begin{itemize}
			\item for all $x,y \in \Omega$ and $R$ with domain $\Omega$, $R(x) < R(y)\iff
			\sigma\circ R(x) <
			\sigma\circ R(y)$;
			\item for any rankings $R,R'$ with domain $\Omega$ there exists a permutation
			$\xi$ such
			that $\sigma\circ R'=\sigma\circ\xi\circ R$;
			\item for any permutation $\xi$, $\xi\circ\sigma = \sigma \circ \xi$.
		\end{itemize}
	\end{definition}

	A rank generator formalises the idea of generating a rank: the ranks it generates
	must be invariant to scale and only dependent on the ordering of elements.
	The standard ranking functions from statistics such as fractional ranking and
	dense ranking fit into this framework.

	\begin{theorem}\label{prop:rho-max-mean}
		Let $\{R_1,R_2,\dots,R_d\}$ be a set of rankings with common domain $\Omega$
		and $\sigma$ be a rank generator.
		Then
		\[
			\arg\max_{R \in \codom \sigma} \rho_n(R, R_1,R_2,\dots,R_d)= \sigma\left(\prod_{j=1}^d R_j\right).
		\]
	\end{theorem}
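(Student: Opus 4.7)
The plan is to unfold the definition of $\rho_n$ from \eqref{eq:multi-spearman-emp} applied to the $(d+1)$-tuple $(R, R_1, \dots, R_d)$ and reduce the optimization to the rearrangement inequality. Concretely,
\[
\rho_n(R,R_1,\dots,R_d) = h(d+1)\left[\frac{2^{d+1}}{n}\sum_{x\in\Omega} R(x)\prod_{j=1}^d R_j(x) - 1\right],
\]
so since $h(d+1)>0$ and the other terms are constants independent of $R$, maximizing $\rho_n$ over $R\in\codom\sigma$ is equivalent to maximizing
\[
S(R) := \sum_{x\in\Omega} R(x)\, a(x), \qquad a(x) := \prod_{j=1}^d R_j(x).
\]

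The key observation is that every $R\in\codom\sigma$ takes the same multiset of values; this follows from the second property of a rank generator (any two rankings in $\codom\sigma$ differ by a permutation of $\Omega$, hence have identical image multisets). Therefore, choosing $R$ amounts to choosing a bijection between the fixed multiset of rank values and the objects $x\in\Omega$. By the rearrangement inequality, $S(R)$ is maximized precisely when the assignment is comonotone with $a$: the largest value of $R$ goes to the $x$ with the largest $a(x)$, etc. Equivalently, the maximizer $R^\star$ must satisfy
\[
a(x) < a(y) \iff R^\star(x) < R^\star(y) \qquad \text{for all } x,y\in\Omega.
\]

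To identify this maximizer as $\sigma(\prod_j R_j) = \sigma(a)$, I would invoke the first property of a rank generator, which says $\sigma(a)$ preserves the strict ordering of $a$, so $\sigma(a)$ itself satisfies the comonotonicity condition above and lies in $\codom\sigma$. The main subtlety, and the only point that requires any care, is uniqueness and handling of ties in $a$: if $a$ has ties then the rearrangement inequality does not pick out a unique $R$, but every optimal $R$ agrees with $\sigma(a)$ on the strict order, and the third property (equivariance $\xi\circ\sigma=\sigma\circ\xi$) together with the second property ensures that $\sigma$ produces a canonical representative in $\codom\sigma$ consistent with the order of $a$. Assuming a generic setting (or adopting the convention of the chosen rank generator to break ties), we conclude $\arg\max_{R\in\codom\sigma}\rho_n(R,R_1,\dots,R_d) = \sigma(\prod_{j=1}^d R_j)$, completing the proof.
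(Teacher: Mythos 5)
Your proof is correct and follows essentially the same route as the paper: both reduce $\rho_n$ via \eqref{eq:multi-spearman-emp} to maximising $\sum_x R(x)\prod_j R_j(x)$ over $\codom\sigma$ and then use the rank-generator axioms (permutation relation, commutation, order preservation) to conclude the comonotone arrangement $\sigma(\prod_j R_j)$ is optimal. The paper phrases this last step as a contradiction over permutations, which is exactly the rearrangement inequality you invoke explicitly (and your remark on ties is a minor refinement the paper glosses over).
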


	\begin{proof}
		Consider the expression for Spearman's
		$\rho_n$~\eqref{eq:multi-spearman-emp}:
		\[
			\rho_n(R, R_1,R_2,\dots,R_d) = h(d+1)\left[\frac{2^{d+1}}{n}
			\sum_x \left(R(x) \prod_{j=1}^d R_j(x)\right) - 1
			\right].
		\]
		Focusing on the terms in the sum,
		showing that the best possible $R(x)$ is $\prod_{j=1}^d R_j(x)$
		reduces to showing
		\[
			\sum_{x \in U}\sigma\circ P(x)P(x)
		\]
		is maximal, where $P := \prod_j R_j$.
		Suppose there exists an $P'$ such that
		\[
			\sum_{x \in U}\sigma\circ P'(x)P(x) > \sum_{x \in U}\sigma\circ P(x)P(x).
		\]
		By definition of $\sigma$, there exists a permutation $\xi$ such that
		\begin{align*}
			\sum_{x \in U}\sigma\circ P'(x)P(x) &= \sum_{x \in U}\sigma\circ\xi\circ P(x)P(x) \\
			&= \sum_{x \in U}\xi \circ \sigma\circ P(x)P(x) \\
			&> \sum_{x \in U}\sigma\circ P(x)P(x).
		\end{align*}
		This is a contradiction for any permutation $\xi$ as $\sigma$ is order
		preserving.
	\end{proof}

	\begin{corollary}
		The converse applies, that is:
		\[
			\arg\min_{R \in \codom \sigma} \rho_n(R, R_1,R_2,\dots,R_d)= \sigma\left(\prod_{j=1}^d (1-R_j)\right).
		\]
	\end{corollary}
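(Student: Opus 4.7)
The plan is to run the same argument as in Theorem~\ref{prop:rho-max-mean}, but with the rearrangement inequality applied in its minimizing direction. Starting from the empirical form
\[
\rho_n(R, R_1, \ldots, R_d) = h(d+1)\left[\frac{2^{d+1}}{n}\sum_{x} R(x) \prod_{j=1}^d R_j(x) - 1\right],
\]
minimizing $\rho_n$ over $R \in \codom\sigma$ reduces to minimizing $\sum_{x \in \Omega} R(x)\, P(x)$ with $P := \prod_{j=1}^d R_j$. By the rearrangement inequality (the dual of the one implicitly invoked in Theorem~\ref{prop:rho-max-mean}), this sum is minimal precisely when the rank values assigned by $R$ appear in the reverse order to the values of $P$ on $\Omega$.

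Next I would identify this reverse-ordered ranking with $\sigma(\prod_{j=1}^d(1-R_j))$. One clean route is to apply Theorem~\ref{prop:rho-max-mean} to the reversed input rankings $1-R_1,\ldots,1-R_d$, each of which is itself a bijection $\Omega \to (0,1)$ and hence a legitimate ranking. The theorem then gives $\sigma(\prod_j(1-R_j))$ as the ranking whose order on $\Omega$ coincides with that of $\prod_j(1-R_j)$, and one argues that this is the opposite order to $\prod_j R_j$, exactly what the rearrangement step requires. The closing step mirrors the contradiction argument in Theorem~\ref{prop:rho-max-mean}: any candidate $R'$ yielding a strictly smaller value of $\sum_x R(x) P(x)$ would be related to $\sigma(\prod_j(1-R_j))$ by some permutation $\xi$ (by the second rank-generator axiom), and the commutation $\xi\circ\sigma=\sigma\circ\xi$ together with the order-preserving property of $\sigma$ would contradict the strict inequality.

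The most delicate point is the identification step, that is, justifying that the ordering produced by $\sigma(\prod_j(1-R_j))$ really is the reverse of the ordering produced by $\sigma(\prod_j R_j)$. This is the analogue of the ``$\sigma$ is order preserving'' punchline in Theorem~\ref{prop:rho-max-mean} and is where all of the actual content of the proof sits; once it is in place, the rearrangement and contradiction steps are direct parallels of those in Theorem~\ref{prop:rho-max-mean}.
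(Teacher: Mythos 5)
Your reduction is sound up to the rearrangement step: minimising $\rho_n$ over $R\in\codom\sigma$ amounts to minimising $\sum_x R(x)P(x)$ with $P:=\prod_{j=1}^d R_j$, and by the rearrangement inequality the minimiser is the ranking that orders the objects in \emph{reverse} of $P$. The gap is exactly the identification step you flag as ``delicate'': it is not merely delicate, it fails in general. The order induced by $\prod_j(1-R_j)$ need not be the reverse of the order induced by $\prod_j R_j$ once $d>1$. Concretely, take $d=2$ and four objects $a,b,c,d$ with normalised ranks $\{0.2,0.4,0.6,0.8\}$, with $R_1$ assigning $(a{:}\,0.2,\ b{:}\,0.4,\ c{:}\,0.6,\ d{:}\,0.8)$ and $R_2$ assigning $(a{:}\,0.8,\ b{:}\,0.2,\ c{:}\,0.4,\ d{:}\,0.6)$. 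Then $P=(0.16,0.08,0.24,0.48)$ on $(a,b,c,d)$, so the reverse order is $d<c<a<b$, whereas $\prod_j(1-R_j)=(0.16,0.48,0.24,0.08)$ orders the objects as $d<a<c<b$. The corresponding values of $\sum_x R(x)P(x)$ are $0.352$ for the reverse-of-$P$ ranking and $0.368$ for $\sigma\bigl(\prod_j(1-R_j)\bigr)$, so the latter is not the minimiser in this instance.

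This also undercuts your proposed shortcut of applying the theorem to the reversed inputs $1-R_1,\dots,1-R_d$: that yields the maximiser of $\rho_n(R,\,1-R_1,\dots,1-R_d)$, which involves $\sum_x R(x)\prod_j\bigl(1-R_j(x)\bigr)$ and is not a monotone transform of $-\sum_x R(x)\prod_j R_j(x)$ when $d>1$, so it answers a different optimisation problem. What the rearrangement argument actually delivers is that the minimiser is $\sigma$ applied to any vector anti-ordered with $P$, for instance $\sigma\bigl(1-\prod_j R_j\bigr)$ or $\sigma\bigl(-\prod_j R_j\bigr)$; the counterexample shows this need not coincide with $\sigma\bigl(\prod_j(1-R_j)\bigr)$. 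The paper itself offers only ``a similar argument'' for this corollary, so the difficulty sits with the statement as much as with your write-up, but as a proof of the statement as written your argument cannot be completed at the identification step.
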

	\begin{proof}
		Proof follows from a similar argument.
	\end{proof}

	\section{Empirical Copulas with Partial Lists}\label{sec:empirical-copula}

	In many applications it is prohibitive to obtain complete annotations of the
	object ranks.
	For
	example, in the document retrieval setting, this amounts to providing ranks for
	all documents.
	The empirical copula requires the set of rankings $\{R_1,\ldots,R_d\}$ to
	comprise of ranks for the same set of
	elements, that is $\dom R_1 = \cdots = \dom R_d$.
	Hence,
	a key challenge in applying Spearman's $\rho$ to rank aggregation is to estimate
	the
	statistic
	on incompletely labelled lists.

	Recall the definition of the empirical
	copula~\eqref{eq:empirical-copula-2d}.
	We now consider the case where $\dom R \neq \dom S $, but $R$ and $S$ are generated from
	two {\em top ranked\/} lists.
	We define extended rankings $R',S'$ with codomain $[0,1]$ such that $\dom R' = \dom
	R
	\cup \dom S = \dom S'$.
	One way to impute the missing values is to set them to a constant value for all the
	ranks below the
	top-$k$ ranks.
	This value is chosen to be the mid point between the start and end of the missing
	section.
	The values in the top-$k$ are retained to be the original values in the extension.
	The definition below formally defines this notion.
	Note that we have
	to renormalise the values.

	\begin{definition}[non-informative extension]\label{def:noninform}
		Let $R$ be a ranking operator and $R'$ be its extension to domain $\dom R'$.
		Then,
		\begin{align}
			R'(x) = \left\{ \begin{array}{lr}
				\frac{|\dom R|}{|\dom R'|}R(x) & x \in \dom R\\
				\frac{|\dom R|+|\dom R'|}{2|\dom R'|} &\text{otherwise}
			\end{array}\right.\label{eq:ext}
		\end{align}
		$\forall x \in \dom R'$.
	\end{definition}

	We call this the non-informative extension since it assumes that all
	items that are not ranked have the same rank (the mean of the missing
	ranks).
	Note that the two experts $R_i$ and $R_j$ may have ranked
	different numbers of objects.
	An advantage of this extension is that it can easily deal with
	the case of more than two experts.
	Consider $d$ experts
	$R_1,\ldots,R_d$, each of which may have ranked a different subset of
	the objects.
	Hence the extension has to impute values on the union of
	items from all experts.
	Denote $\dom R' := \dom R_1 \cup \ldots \cup
	\dom R_d$, then we can apply \cref{def:noninform} to
	complete each ranking operator $R_j$.
	An additional advantage to the
	non-informative extension is that it results in a consistent ranking.

	\begin{definition}
		An extended ranking $R'$ of $R$ is called {\em consistent\/} if the following
		axioms
		hold:

		\begin{enumerate}
			\item $R'(x) < R'(y)$ $\forall x,y \in \dom R$ with $R(x) < R(y)$
			\item $R'(x) = R'(y)$ $\forall x,y \in \dom R$ with $R(x) = R(y)$
			\item $R'(y) > R'(x)$ $\forall x \in \dom R,\, y \in \dom R'$
		\end{enumerate}

		If $E[R] = E[R']$ also holds, then $R'$ is called {\em strictly consistent}.
	\end{definition}

	\begin{lemma}
		\Cref{def:noninform} produces a consistent ranking.
		If $E[R]=\frac{1}{2}$ then~\eqref{eq:ext} produces a strictly
		consistent ranking.
	\end{lemma}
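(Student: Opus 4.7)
Let $n = |\dom R|$ and $N = |\dom R'|$, so that $n \leq N$. The plan is to check the three axioms for consistency directly from \eqref{eq:ext}, and then compute $E[R']$ to verify the strict version.

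For axioms 1 and 2, observe that on $\dom R$ the extension is just multiplication by the positive constant $n/N$, so both strict and non-strict inequalities among the values $R(x)$ are preserved verbatim by $R'(x) = (n/N)R(x)$. The only real work is axiom 3, where I need to show that the imputed constant $c := (n+N)/(2N)$ strictly exceeds $R'(x) = (n/N)R(x)$ for every $x \in \dom R$. Since $R$ maps into $(0,1)$ using the fractional-rank convention from \cref{sec:bivariate-spearman} (ranks divided by $n+1$), the maximum value of $R(x)$ is $n/(n+1)$, so it suffices to verify $c > (n/N)\cdot n/(n+1)$. Clearing denominators reduces this to $(n+N)(n+1) > 2n^2$, i.e.\ $N(n+1) > n(n-1)$, which holds for every $N \geq n$. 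This establishes consistency.

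For strict consistency under the assumption $E[R] = 1/2$, I would compute
\begin{align*}
E[R'] &= \frac{1}{N}\left[\sum_{x\in\dom R}\frac{n}{N}R(x) + (N-n)\cdot\frac{n+N}{2N}\right] \\
&= \frac{n^2}{N^2}\,E[R] + \frac{N^2 - n^2}{2N^2},
\end{align*}
using that the first sum equals $(n/N)\cdot n\cdot E[R]$. Substituting $E[R]=1/2$ the two fractions combine to $N^2/(2N^2) = 1/2 = E[R]$, so $R'$ is strictly consistent.

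The only subtle step is axiom 3, which relies on the normalisation convention for $R$ (ranks lying in $(0,1)$ with maximum $n/(n+1)$); everything else is linear rescaling or an elementary algebraic identity.
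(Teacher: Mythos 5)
Your proof is correct and takes essentially the same route as the paper: axioms 1--2 follow from monotonicity of the rescaling $x \mapsto (n/N)x$, axiom 3 from comparing the imputed constant $(n+N)/(2N)$ against the largest rescaled observed rank, and strict consistency from the same direct computation of $E[R']$ (your algebra, giving $E[R'] = \tfrac{n^2}{N^2}E[R] + \tfrac{N^2-n^2}{2N^2} = \tfrac12$ when $E[R]=\tfrac12$, matches the paper's). The only minor difference is that your axiom-3 step leans on the fractional-rank bound $R(x)\le n/(n+1)$, whereas the weaker facts $R(x)<1$ and $N\ge n$ already give $(n+N)/(2N)\ge n/N > (n/N)R(x)$, which is the comparison the paper's (somewhat garbled) inequality chain is aiming at.
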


	\begin{proof}
		The notation $|\dom R|$ can become unwieldly in following proof.
		We therefore adopt the shorthand notations $r:=|\dom R|$
		and $r':=|\dom R'|$ for the size of the respective sets.

		Axioms 1 and 2 are satisfied by definition as
		the map $x \mapsto \frac{r}{r'}x$ is monotonic.
		For all $x \in \dom R' \setminus \dom R$,
		\begin{equation*}
			R'(x) = \frac{r+r'}{2r'}
			\leqslant 2R(y)\frac{r+r'}{2r'}
			\leqslant \frac{2R(y)r}{2r'}
			= \frac{r}{r'}R(y)
			= R'(y)
		\end{equation*}
		for any $y \in \dom R$, satisfying axiom 3.

		Furthermore, as
		\begin{align*}
			E[R'] &= \frac{1}{r'}\left(\sum_{x \in \dom R} R'(x) + \sum_{x
			  \in \dom R' \setminus \dom R} R'(x)\right) \\
			&= \frac{1}{r'}\left(\frac{r}{r'}\sum_{x \in \dom R} R(x) + (r' - r)\frac{r+r'}{2r'}\right) \\
			&= \frac{1}{r'}\left(\frac{r^2}{r'}E[R] + (r'-r)\frac{r+r'}{2r'}\right) \\
			&= \frac{r^2(2E[R]-1)+r'}{2r'^2},
		\end{align*}
		$R'$ is strictly consistent if $E[R] = \frac{1}{2}$.
	\end{proof}

	\Cref{def:noninform} is called a {\em non-informative\/} extension as it uses
	no additional information and does not bias the imputed elements in anyway:
	imputed values are all considered tied and mapped to the same value.
	Furthermore, the strictly consistent property that \cref{def:noninform}
	satisfied is important
	when using fractional ranking as it guarantees no introduction of bias.

	Note also that there is a dual imputation whereby missing values are assigned to
	the top of the list rather than the bottom.
	This is equivalent to the above
	imputation applied to reverse rankings.
	The choice of top or bottom
	imputation is application dependent.

	\subsection{Empirical Upper and Lower Bounds}

	\begin{proposition}
		For top-$k$ lists where $k$ of $n$ items are ranked by all $d$ experts
		with codomain $\{1,\dots,n\}$ (i.e., unnormalised ranks), the
		Spearman's $\rho$ is bounded by
		\[
			\rho_n(R_1,R_2,\dots,R_d) = \rho_k(R_1,R_2,\dots,R_d) + C,
		\]
		where
		\begin{gather*}
			\frac{{2}^{d}\,\mathrm{h}\left( d\right) \,\left( \left( k\,{\left( k+1\right)
			    }^{d}-n\,{\left( n+1\right) }^{d}\right) \,\left(
			  \sum_{i=1}^{k}\prod_{j=1}^{d}\frac{{R}_{j}\left( i\right) }{k+1}\right)
			  +k\,\sum_{i=k+1}^{n}{i}^{\frac{d}{2}}\,{\left( k-i+n+1\right)
			    }^{\frac{d}{2}}\right) }{n\,{\left( n+1\right) }^{d}\,k}\\
			\leq C \leq\\
			\frac{{2}^{d}\,\mathrm{h}\left( d\right) \,\left( \left( k\,{\left(
			    k+1\right) }^{d}-n\,{\left( n+1\right) }^{d}\right) \,\left(
			  \sum_{i=1}^{k}\prod_{j=1}^{d}\frac{{R}_{j}\left( i\right)
			    }{k+1}\right) +\left( \sum_{i=k+1}^{n}{i}^{d}\right) \,k\right)
			  }{n\,{\left( n+1\right) }^{d}\,k}.
		\end{gather*}
	\end{proposition}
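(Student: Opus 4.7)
The plan is to work directly with~\eqref{eq:multi-spearman-emp} applied twice, once at sample size $n$ with un-normalised ranks in $\{1,\ldots,n\}$ and once at sample size $k$ with top-$k$ ranks in $\{1,\ldots,k\}$, and then control the difference $C=\rho_n-\rho_k$ by bounding a single combinatorial quantity over all completions of each expert's top-$k$ list.

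First, I would expand
\[
\rho_n = h(d)\left[\frac{2^d}{n(n+1)^d}\sum_{x=1}^n \prod_{j=1}^d R_j(x) - 1\right],\qquad
\rho_k = h(d)\left[\frac{2^d}{k(k+1)^d}\sum_{i=1}^k \prod_{j=1}^d R_j(i) - 1\right],
\]
split the $n$-sum in $\rho_n$ into its top-$k$ part (whose factors coincide with those appearing in $\rho_k$) and its bottom $n-k$ part, substitute the top part back via the $\rho_k$ identity, and clear fractions. Writing $B:=\sum_{x=k+1}^n \prod_{j=1}^d R_j(x)$ for the contribution of the unknown ranks, routine algebra yields
\[
C = \frac{2^d h(d)}{n(n+1)^d\,k}\left[\bigl(k(k+1)^d - n(n+1)^d\bigr)\sum_{i=1}^k\prod_{j=1}^d \frac{R_j(i)}{k+1} + k\,B\right].
\]
Because the first term in the bracket is completely determined by the observed data and the coefficient of $B$ is the positive number $k$, proving the proposition reduces to bounding $B$ above and below over all ways of completing each expert's ranking on $\{k+1,\ldots,n\}$.

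For the upper bound I would invoke the multivariate rearrangement inequality: when $d$ permutations of the sorted set $\{k+1,\ldots,n\}$ are chosen to maximise $\sum_i \prod_j R_j(i)$, the optimum occurs when all $d$ permutations coincide, giving $B\leq \sum_{i=k+1}^n i^d$. This follows from the classical two-sequence version by freezing all but two permutations, optimising over the remaining pair (which forces them to agree), and iterating in $d$. For the lower bound I would target $B\geq \sum_{i=k+1}^n (i(n+k+1-i))^{d/2}$. The guiding picture is the half-and-half configuration in which $d/2$ experts rank the tail ascendingly and $d/2$ descendingly, producing the product $i^{d/2}(n+k+1-i)^{d/2}$ at each position $i$; this is realisable and optimal for even $d$ by a pairwise rearrangement argument. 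For odd $d$ it is not realisable as integer permutations, so I would instead pair each position $i$ with its mirror $n+k+1-i$, apply AM--GM to the combined product $\prod_j R_j(i)\cdot\prod_j R_j(n+k+1-i)$, and couple it with pairwise rearrangement on the $d$ experts to recover the same geometric-mean expression.

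Substituting $\sum_{i=k+1}^n (i(n+k+1-i))^{d/2} \leq B \leq \sum_{i=k+1}^n i^d$ into the displayed expression for $C$ immediately produces the two inequalities in the proposition. The main obstacle will be the lower bound on $B$ for odd $d$: the half-and-half configuration is not realisable, the true discrete minimum is strictly larger than the claimed bound (as one already sees at $d=3$, $n=3$, $k=0$), and the argument must therefore proceed through an AM--GM inequality on mirrored positions that interpolates between the even cases rather than by exhibiting a single extremal configuration. The algebraic derivation of $C$ and the upper bound on $B$ are by contrast essentially bookkeeping.
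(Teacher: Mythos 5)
Your reduction is the same as the paper's: the closed form you derive for $C$ is exactly the expression in the paper's proof sketch (with $h(d)=\frac{d+1}{2^d-(d+1)}$ written out), and both you and the paper then bound the tail term $B=\sum_{i=k+1}^{n}\prod_{j=1}^{d}R_j(i)$ from above by the ``all experts agree'' configuration, $B\le\sum_{i=k+1}^{n}i^{d}$, which is a correct multivariate rearrangement bound (your freeze-and-iterate argument is a standard way to get it). The genuine gap is the lower bound on $B$, and it cannot be closed along the lines you propose. First, your claim that for even $d$ the half-ascending/half-descending configuration minimises $B$ ``by a pairwise rearrangement argument'' is false: take $d=4$, $k=1$, $n=4$ (tail ranks $\{2,3,4\}$) with expert tails $(2,3,4),(3,2,4),(4,3,2),(3,4,2)$; then $B=2\cdot3\cdot4\cdot3+3\cdot2\cdot3\cdot4+4\cdot4\cdot2\cdot2=208$, strictly below the half-and-half value $\sum_{i=2}^{4}\bigl(i(6-i)\bigr)^{2}=209$. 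Minimising a sum of products of $d\ge 3$ permutations is not settled by pairwise swaps, unlike the maximisation. Second, the inequality you actually need, $B\ge\sum_{i=k+1}^{n}\bigl(i(n+k+1-i)\bigr)^{d/2}$, is itself false for $d\ge3$, so no AM--GM/mirroring patch can rescue it: for $d=3$, $k=1$, $n=4$ the cyclic tails $(2,3,4),(3,4,2),(4,2,3)$ give $B=72<8^{3/2}+27+8^{3/2}\approx 72.25$, and the $d=4$ example above gives $208<209$. Your parenthetical about $d=3$ has the direction backwards: the discrete minimum sits \emph{below} the claimed bound, not above it, which is precisely why the step fails rather than merely being loose.

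In fairness, the paper's own proof has the same hole: it asserts a per-item inequality of the form $\prod_{j}R_j(i)\ge i^{\lceil d/2\rceil}(n-i+1+k)^{d-\lceil d/2\rceil}$ without argument (and with garbled indices), and such a termwise bound cannot even be formulated independently of how the unranked items are labelled; your more careful attempt simply exposes the missing step at the sum level. So you have reproduced the paper's route --- same decomposition of $C$, same two extremal configurations --- with sound algebra and a sound upper bound, but the lower half of the proposition needs a different, provable bound. For instance, AM--GM over the $n-k$ tail items gives the valid inequality $B\ge(n-k)\bigl(n!/k!\bigr)^{d/(n-k)}$ (tight in the $d=3$ example above, where it equals $72$), or one can restrict to $d=2$, where the two-sequence rearrangement inequality does yield exactly the stated expression.
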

	\begin{proof}
		Proof sketch: the definition of $\rho$ for unnormalised rankings is
		\[
			\rho_n(R_1,R_2,\dots,R_d) = h(d+1)\left[\frac{2^{d+1}}{n} \sum_{i=1}^n
			\left(\prod_{j=1}^d \frac{R_j(i)}{n+1}\right) - 1.
			\right]
		\]
		By considering the difference $\rho_n(R_1,R_2,\dots,R_d) -
		\rho_k(R_1,R_2,\dots,R_d)$
		and factorising out the common terms, we obtain
		\[
			C = \frac{\left( d+1\right) \,{2}^{d}\,\left( k\,\left(
			  \sum_{i=1}^{n}\prod_{j=1}^{d}\frac{{R}_{j}\left( i\right)
			    }{n+1}\right) -\left( \sum_{i=1}^{k}\prod_{j=1}^{d}\frac{{R}_{j}\left(
			    i\right) }{k+1}\right) \,n\right) }{\left( {2}^{d}-d-1\right)
			  \,k\,n}.
		\]
		The term $\sum_{i=1}^{n}\prod_{j=1}^{d}\frac{{R}_{j}\left( i\right)
		  }{n+1}$
		can be bounded above by
		\[
			\sum_{i=1}^{n}\prod_{j=1}^{d}\frac{{R}_{j}\left( i\right) }{n+1} =
			\sum_{i=1}^{k}\prod_{j=1}^{d}\frac{{R}_{j}\left( i\right) }{n+1}+
			\sum_{i=1+k}^{n}\prod_{j=1}^{d}\frac{{R}_{j}\left( i\right) }{n+1}
			\leq
			\sum_{i=1}^{k}\prod_{j=1}^{d}\frac{{R}_{j}\left( i\right) }{n+1}+
			\sum_{i=1+k}^{n}{\left( \frac{i}{n+1}\right)}^d,
		\]
		and below by
		\begin{align*}
			\sum_{i=1}^{n}\prod_{j=1}^{d}\frac{{R}_{j}\left( i\right) }{n+1} &=
			\sum_{i=1}^{k}\prod_{j=1}^{d}\frac{{R}_{j}\left( i\right) }{n+1}+
			\sum_{i=1+k}^{n}\prod_{j=1}^{d}\frac{{R}_{j}\left( i\right) }{n+1}\\
			&\geq
			\sum_{i=1}^{k}\prod_{j=1}^{d}\frac{{R}_{j}\left( i\right) }{n+1} +
			\sum_{i=k+1}^{n}\left(
			\prod_{j=i}^{\lceil\frac{d}{2}\rceil}\frac{i}{n+1}\right)
			\,\prod_{j=\lceil\frac{d}{2}\rceil}^d\frac{n-i+1+k}{n+1},
		\end{align*}
		giving us the bounds in the proposition.
	\end{proof}

	\subsection{Optimal Imputation}\label{sec:optimpute}

	An alternative to the previously presented imputation method is to impute such
	that $\rho$ is maximised or minimised.
	In general this is a NP-hard problem as it
	involves searching all permutations.
	In this section, we formulate
	this as an optimisation problem.

	Let $\mathbb{I} = \{1,\dots,n\} \times \{1,\dots,d\}$ be indices over $n$ items
	and $d$ experts.
	Let $\mathbb{O} \subset \mathbb{I}$ be the observed indices
	(for which we have a rank) and define $\mathbb{U} := \mathbb{I} \setminus
	\mathbb{O}$.
	We then have a rank function $R \colon \mathbb{O} \to
	\{1,\dots,n\}$.
	Recall that Spearman's $\rho$ is determined by a sum of the products over
	ranks.
	By introducing a log transformation, we convert the product
	into a sum using the logarithm rule:
	\[
		\sum_{i=1}^n \left(\prod_{j=1}^d \frac{R_j(i)}{n+1}\right) =
		\sum_{i=1}^n \left( \exp \log \prod_{j=1}^d \frac{R_j(i)}{n+1}
		\right)
		= \sum_{i=1}^n \left( \exp \sum_{j=1}^d \log \frac{R_j(i)}{n+1}
		\right).
	\]

	\subsubsection{Imputing to Maximise Correlation}
	We can maximise Spearman's $\rho$ by introducing binary indicators $x_{i,j,k}$
	indexed over $\mathbb{I} \times \{1,\dots,n\}$ to denote a rank of $k$ for
	item $i$ in list $j$.

	\[ \max_{x_{i,j,k}} \sum_{i=1}^n\exp\left[ \sum_{j=1}^d \sum_{k=1}^n
	x_{i,j,k}\log\left(\frac{k}{n+1}\right)\right] \]
	such that
	\begin{align}
		\sum_k x_{i,j,k} &= 1 &\forall i,j\label{peritem}\\
		\sum_i x_{i,j,k} &= 1 &\forall k,j\label{perlist}\\
		\sum_{k} x_{i,j,k}k &= R(i,j) &\forall (i,j) \in \mathbb{O}\label{matchknown}\\
		x_{i,j,k}&\in \{0,1\} &\forall i,j,k
	\end{align}

	Constraint~\eqref{peritem} ensures an item is only assigned one rank per expert,
	and constraint~\eqref{perlist} ensures a rank is only assigned once per expert.
	Finally, the third constraint~\eqref{matchknown} ensures known ranks are
	assigned.

	\subsubsection{Imputing to Minimise Correlation}

	Analogously, we can consider the problem of minimising Spearman's $\rho$.
	\begin{equation}
		\label{eq:imputation-min}
		\min_{x_{i,j,k}} \sum_{i=1}^n\exp\left[ \sum_{j=1}^d \sum_{k=1}^n x_{i,j,k}\log\left(\frac{k}{n+1}\right)\right]
	\end{equation}
	such that
	\begin{align*}
		\sum_k x_{i,j,k} &= 1 &\forall i,j\\
		\sum_i x_{i,j,k} &= 1 &\forall k,j\\
		\sum_{k} x_{i,j,k}k &= R(i,j) &\forall (i,j) \in \mathbb{O}\\
		x_{i,j,k}&\in \{0,1\} &\forall i,j,k
	\end{align*}
	By considering the relaxation of $x_{i,j,k}\in \{0,1\}$ to
	$x_{i,j,k}\in [0,1]$, we obtain a convex optimisation problem.

	\begin{proposition}
		The relaxation of optimisation problem~\eqref{eq:imputation-min} such that
		$x_{i,j,k}$ is in the interval $[0,1]$ is a convex
		optimisation problem.
	\end{proposition}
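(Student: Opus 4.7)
The plan is to verify the two ingredients needed for a convex program: a convex feasible set and a convex objective. I will then combine them.

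First I would examine the feasible set. After relaxing $x_{i,j,k}\in\{0,1\}$ to $x_{i,j,k}\in[0,1]$, each of the constraints~\eqref{peritem},~\eqref{perlist}, and~\eqref{matchknown} is an affine equality in $x$, and the relaxed integrality constraint defines the hypercube ${[0,1]}^{n^2 d}$, which is convex. The feasible set is the intersection of finitely many half-spaces and affine subspaces, hence a convex polytope.

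Next I would establish convexity of the objective. For each fixed $i$, the inner expression $\ell_i(x):=\sum_{j=1}^d\sum_{k=1}^n x_{i,j,k}\log\!\bigl(\tfrac{k}{n+1}\bigr)$ is an affine function of $x$. Since $t\mapsto\exp(t)$ is convex on $\RR$, the composition $\exp\circ\,\ell_i$ is convex (composition of a convex nondecreasing function with an affine map). The full objective is $\sum_{i=1}^n \exp(\ell_i(x))$, and a nonnegative sum of convex functions is convex, so the objective is convex on all of $\RR^{n^2 d}$, in particular on the feasible set.

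Finally, I would combine these observations: minimising a convex function over a convex set is by definition a convex optimisation problem, which completes the proof. There is no real obstacle here since both steps reduce to standard closure properties of convex functions and sets; the only thing to be careful about is to note that the coefficients $\log(k/(n+1))$ are constants (they do not depend on the decision variables), so that $\ell_i$ is genuinely affine in $x$ rather than bilinear.
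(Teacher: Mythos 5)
Your proof is correct and follows essentially the same route as the paper: the objective is a sum of exponentials of affine (linear) functions of $x$, hence convex, and the relaxed constraints are affine together with box constraints, so the feasible set is convex. Your write-up is simply a slightly more explicit version of the paper's argument, spelling out the composition rule and the polyhedral structure of the feasible region.
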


	\begin{proof}
		The objective has the form $\sum_i \exp ( \langle x_i, \omega
		\rangle)$ with $\omega \in
		{[\log\left(\frac{1}{n+1}\right), \dots,
		  \log\left(\frac{n}{n+1}\right)]}^{nd}$.
		Thus, as each term in the sum is convex, and as the sum of convex functions is
		convex, the objective is convex.
		The constraints are all linear,
		hence this is a convex optimisation problem.
	\end{proof}

	\cheng{Is the solution of the relaxed minimisation problem always integer?}

	\cheng{The objective is likely to be submodular since the relaxed
	  version is convex.
	  Unfortunately, this still doesn't give us an easy
	  discrete optimisation problem.}

	However, as a consequence of \cref{th:minW}, as $d\to\infty$ we know
	that $\rho\geq 0$, hence the minimum $\rho$ will approach the $\rho$
	when using the non-informative extension (the non-informative
	extension has $\rho=0$), thus there is little need to solve the
	optimisation problem after a sufficient number of dimensions is
	reached.

	\subsection{Experiments on University Ranking}

	The optimal imputation algorithm presented in \cref{sec:optimpute} is
	difficult to
	solve due to the integer constraints.
	We evaluated the performance of a relaxed
	version of the program, whereby the constraints are relaxed such that the
	variables may take a value in the range $[0,1]$.
	To solve the relaxed problem,
	we used a BFGS based optimiser by shifting the equality constraints into the
	objective function with high penalties.
	Final ranks were determined by ranking
	item $i$ in list $j$ based on the score $\sum_{k} x_{i,j,k}k$.

	We evaluated this relaxed solution on imputing rankings for universities.
	To
	this end, the top-200 universities ranked by QS in 2014, Shanghai in 2014, and
	Times in 2015 were obtained.
	In aggregating these three lists, there are a total
	of 266 ranks that need to be imputed.

	Measuring multivariate Spearman's $\rho$ on all three lists imputing the missing
	elements using the non-informative extension gives $\rho = 0.632$.
	In
	comparison, the relaxed optimal imputation found a solution that obtained $\rho
	= 0.683$, a modest increase in the correlation.
	We also developed an
	interactive website\footnote{\url{http://uni.cua0.org}} showing the
	detailed
	results for all universities, which also allows the user to alter the
	weights of each of the original experts.
	The top 36 aggregate rankings for the
	universities are given in~\cref{uniranks}.

	\section{Supervised Learning to Rank}\label{sec:benchmarks}

	We now consider the task of learning rank aggregation from extreme ranks.
	\Cref{prop:rho-max-mean} and \cref{def:noninform} provide the core of
	our algorithm.
	Using \cref{prop:rho-max-mean}, we can find an “average” rank that
	aggregates a set of ranks, and by extending top-$k$ and bottom-$k$
	ranks to a common domain, we can apply it to partially labelled data.

	\subsection{Weighted Mixture of Experts}

	As a result of \cref{prop:rho-max-mean} we have a way of finding the ranking
	(according to some rank generator) that is closest to a set of ranks.
	Consider the learning problem where we have a ranking $L$ which
	comprise our labels, and a set of $d$ experts $\{R_j\}$.
	During training, we would like to find a weighting of the input
	rankings such that it gives the label.
	Given a target ranking $L$,
	we would like to optimise the weights $\omega$,
	\[
		\max_{\omega} \rho_n(L, R_1^{\omega_1}, R_2^{\omega_1}, \ldots,
		R_n^{\omega_d}).
	\]
	Here we have introduced weights $\omega$ over each rank to control the influence of
	each rank over the final consensus rank; the intuition here is that ranks with
	$\omega_i>1$ are “replicated” with more influence, which is easy to see when
	$\omega_i$ are natural numbers.
	For example, a weight of $2$ would mean the ranked list has appeared twice in the
	calculation of the consensus rank.
	While it is convenient to have
	integer weights for interpretability, the weights $\omega$ could be
	any real number in general.
	In the following, we consider
	$\omega\in\RR^n$.
	Instead of performing this high-dimensional optimisation, we
	decompose it into a pairwise (bivariate) comparison between the label
	$L$ and the weighted geometric mean, where we now explicitly show the
	fact that the ranks are a function of the $n$ objects $x$
	\[
		\max_{\omega} \sum_{x} \rho_n(L(x),\, \sigma(R_1^{\omega_1}\otimes
		R_{2}^{\omega_2}\otimes\dots\otimes R_{d}^{\omega_n})(x)),
	\]
	where the notation $\otimes$ indicates the product
	operator.
	Observe that we have used \cref{prop:rho-max-mean}
	to convert the $d$ dimensional problem into the product of
	ranks $R_j$ and the Spearman's correlation above is only two dimensional.
	For bivariate Spearman's $\rho$, this can be expressed in terms of the
	squared difference~\eqref{eq:emp-spearman}.
	We further assume that $\sigma$ is the identity mapping to simplify the
	problem, giving us:
	\begin{equation}\label{eq:opt}
		\min_{\omega} \sum_{x} {\left(L(x) - R_1^{\omega_1}(x)R_{2}^{\omega_2}(x)\dots R_{d}^{\omega_n}(x)\right)}^2.
	\end{equation}
	The objective~\eqref{eq:opt} minimises the distance between the label ranks and
	the weighted expert ranks.

	\subsection{Least Squares Method on Logarithm of Ranks}

	Recall that we consider normalised ranks (divided by
	$n+1$).
	By using the logarithm identity, we
	convert the power scaling in~\eqref{eq:opt} into a
	multiplicative scaling.
	Our algorithm is:
	\begin{enumerate}
		\item Extend incomplete ranks $\{R_i\}$ to $\{R'_i\}$ by imputing
		the average missing value
		such that $\dom R'_i = \dom L$;
		\item Convert to log-ranks $r'_i = \log\circ R'_i$ and $l = \log \circ L$;
		\item Learn weights $\omega$ by minimising
		\[
			\sum_{x}{\left(l(x) - \sum_{j=1}^d\omega_j r'_j(x)\right)}^2,
			\qquad\mbox{where the outer sum is over the $n$ examples $x$.}
		\]
	\end{enumerate}

	A log transformation
	of the ranks is used as it naturally encodes the weights as a power
	scaling in the framework of \cref{prop:rho-max-mean}, i.e., the
	weighted consensus rank is given by $\prod_i{r'_i(x)}^{\omega_i}$.
	Note that this is still solving~\eqref{eq:opt} as we are optimising Spearman's
	$\rho$, which is sensitive only to ordering, and therefore though the final
	weights
	are different $\rho$ is maximised via~\eqref{eq:emp-spearman}.

	In the following experiments we also included a bias/offset term in the
	least squares problem, which can be interpreted as adding a ranking
	that is constant (gives all objects the same rank).
	It is
	interesting to note that the final step in this procedure is closely related to
	Borda
	Count, except our consensus rank is the geometric mean instead of the
	arithmetic mean.
	Since this is a least squares estimation problem, we directly use the closed form
	solution.

	\subsection{Benchmarking on LETOR 4.0}

	\begin{figure}[tbp]
		\subfloat[MQ2007]{\includegraphics[width=0.49\textwidth]{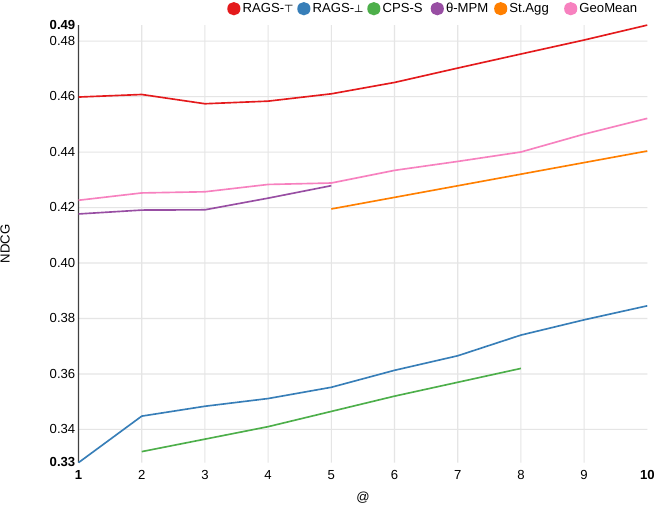}\label{fig:mq-agg:2007}}
		\subfloat[MQ2008]{\includegraphics[width=0.49\textwidth]{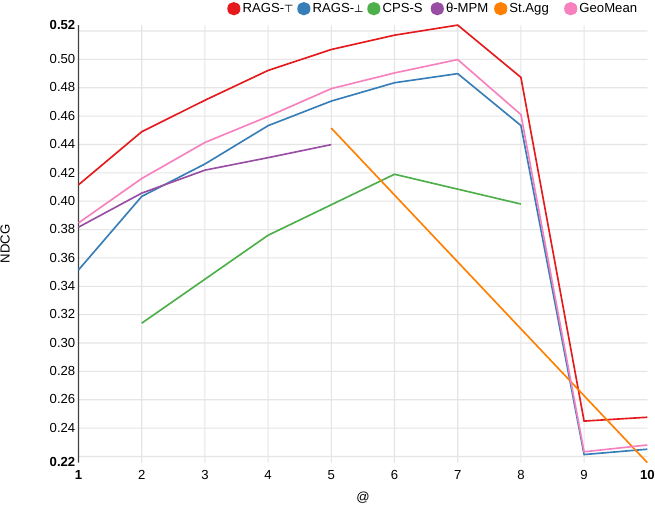}\label{fig:mq-agg:2008}}
		\caption{Results on MQ2007-agg (a, left) and MQ2008-agg (b, right): NDCG@k.
		  Our method is labelled RAGS-⊤ and RAGS-⊥ corresponding to top and bottom
		  non-informative imputation respectively.
		  The results for CPS-S was the best reported in~\citet{qin10promra}.
		  The results of $\theta$-MPM was the
		  best among the reported results in~\citet{volkovs12flegmp} from BordaCount,
		  CPS,
		  SVP, Bradley-Terry model, and Plackett-Luce model.
		  The results of St.Agg was the best among the
		  reported results in~\citet{niu13stora} and was the best among MCLK, SVP,
		  Plackett-Luce model, $\theta$-MPM,
		  BordaCount and RRF\@.
		  }\label{fig:mq-agg}
	\end{figure}

	We tested our method on the MQ2007-agg and MQ2008-agg list aggregation
	benchmarks~\cite{qin10letor}.
	The goal in these challenges is to aggregate 21 and 25 different rankers
	respectively over a set of
	query-document pairs.
	Each data set has 5 pre-defined cross-validation folds with each fold providing a
	training, testing and validation data set (60\%/20\%/20\%).
	We trained our model on the training set and tested on the testing
	set, leaving the validation set unused since we have no hyperparameters.

	In the following we consider two types of experts:
	either experts $\{R_j\}$ are top-$k$ experts, that is they only rank the best $k$
	samples from
	$\Omega$, or experts are bottom-$k$ experts, that is they identify the
	worst $k$ samples from $\Omega$.
	We call our proposed method RAGS-⊤ and
	RAGS-⊥ respectively.
	We assume that the ranked documents in the
	benchmark data sets are either top-$k$ or bottom-$k$ respectively,
	with potentially different numbers of documents $k$ labelled by each
	expert.
	Ties are given the average rank of tied documents.

	To evaluate the agreement, we use the standard evaluation tool from the LETOR
	website\footnote{\url{http://research.microsoft.com/letor}}, which
	implements the Normalised Discounted
	Cumulative Gain (NDCG).
	In \cref{fig:mq-agg:2007}, we see that our approach RAGS-⊤ performs better than
	all other methods at any selection size on the MQ2007-agg data set.
	Indeed, we also perform better than \citet{qin10promra} where the best result
	uses a coset-permutation
	distance based stagewise (CPS) model with Spearman's $\rho$ in a probabilistic
	model.
	Recall that
	our approach considers the multivariate Spearman's $\rho$ whereas
	\citet{qin10promra}
	uses bivariate Spearman's $\rho$ in a pairwise fashion.
	For MQ2008-agg (\cref{fig:mq-agg:2008}), again our approach performs better
	than all other methods.

	\begin{sidewaystable*}[p]
		\centering
		\caption{Results on MQ2007-agg: NDCG\@.
		  Our method is labelled RAGS-⊤ and RAGS-⊥ corresponding to top and bottom
		  non informative imputation respectively.
		  The results for CPS-S was the best reported in \citet{qin10promra}.
		  The results of $\theta$-MPM was the
		  best among the reported results in \citet{volkovs12flegmp} from BordaCount,
		  CPS,
		  SVP, Bradley-Terry model, and Plackett-Luce model.
		  The results of St.Agg was the best among the
		  reported results in \citet{niu13stora} and was the best among MCLK, SVP,
		  Plackett-Luce model, $\theta$-MPM,
		  BordaCount and RRF.}\label{tab:mq2007:ndcg}
		\begin{tabular}{ccccccccccc}
			\toprule
			Fold & @1 & @2 & @3 & @4 & @5 & @6 & @7 & @8 & @9 & @10 \\
			\midrule
			RAGS-⊤ & 0.45986 & 0.46078 & 0.45744 & 0.45838 & 0.46102 & 0.46512 & 0.4703 & 0.47538 &
			0.48042 & 0.4858 \\
			RAGS-⊥ & 0.32804 & 0.3448 & 0.34836 & 0.35114 & 0.3552 & 0.36132 & 0.36656 & 0.37402 &
			0.37952 & 0.38458 \\
			\midrule
			CPS-S & & 0.332 & & 0.341 & & 0.352 & & 0.362 & & \\
			θ-MPM & 0.4177 & 0.4191 & 0.4192 & 0.4234 & 0.4279 & & & & & \\
			St.Agg & & & & & 0.4195 & & & & & 0.4404 \\
			GeoMean & 0.42264 & 0.42528 & 0.42570 & 0.42834 & 0.42886 & 0.43342 & 0.43664 &
			0.44004 & 0.44648 & 0.45216 \\
			\bottomrule
		\end{tabular}
	\end{sidewaystable*}

	\begin{sidewaystable*}
		\centering
		\caption{Results on MQ2008-agg: NDCG}\label{tab:mq2008:ndcg}
		\begin{tabular}{cccccccccccc}
			\toprule
			Fold & @1 & @2 & @3 & @4 & @5 & @6 & @7 & @8 & @9 & @10 \\
			\midrule
			RAGS-⊤ & 0.41158 & 0.44898 & 0.47118 & 0.4922 & 0.50696 & 0.51706 & 0.52416 & 0.48732 &
			0.24498 & 0.24768 \\
			RAGS-⊥ & 0.35156 & 0.40338 & 0.42624 & 0.45326 & 0.4706 & 0.48352 & 0.48994 & 0.45336 &
			0.22138 & 0.22514 \\
			\midrule
			CPS-S & & 0.314 & & 0.376 & & 0.419 & & 0.398 & & \\
			θ-MPM & 0.3817 & 0.4057 & 0.4219 & 0.4307 & 0.4399 & & & & & \\
			St.Agg & & & & & 0.4515 & & & & & 0.2157 \\
			GeoMean & 0.38470 & 0.41600 & 0.44142 & 0.45976 & 0.47938 & 0.49042 & 0.49986 &
			0.46108 & 0.22334 & 0.22812 \\
			\bottomrule
		\end{tabular}
	\end{sidewaystable*}

	To tease apart the effect of imputing missing ranks and the effect of weighting the
	experts, we
	compared our proposed method with and without training (uniform
	weights).
	GeoMean denotes the results for the geometric mean (uniform weights on
	the experts) after performing imputation assuming top-$k$ ranking by
	the experts.
	First we observe that our proposed approach outperforms the geometric mean,
	which is a good sanity check.
	It is surprising that the geometric mean performs quite well in MQ2007.
	The major
	difference is that we are imputing the missing ranks, and the other methods suffer
	from assigning
	them to an arbitrary value.
	This demonstrates the importance of imputation.

	\subsection{Strictly Ordered Labels}

	One issue with the benchmark aggregation data set is that the labels are only
	\{0,1,2\} relevance
	scores, and hence it is unclear exactly what the rankings are within the relevance
	classes.
	We create a new data set which is formed by taking the intersection
	between the documents retrieved by a particular query between
	MQ2007-agg and MQ2007-list.
	This new data set contains the strictly
	ordered labels from MQ2007-list, but uses the aggregation data from MQ2007-agg.
	The same procedure
	is used to create the corresponding data set for MQ2008-agg and MQ2008-list.
	These data sets are
	available for download at the LETOR website.
	We maintain exactly the same 5-fold cross
	validation splits and report our results in~\cref{tab:mq-agglist}.

	Considering the results for Spearman's $\rho$, we observe that our learning
	method
	performs well.
	Note that the geometric mean outperforms Borda count on both data sets,
	which confirms that our theoretically justified model performs better
	than the heuristic model.
	It is interesting to observe that optimising for Spearman's $\rho$ could
	result in a decrease in Kendall's $\tau$.
	This demonstrates the importance
	of choosing the appropriate objective function for learning.

	\begin{table}[ht]
		\centering
		\caption{Results on MQ2007-agglist and MQ2008-agglist.
		  The left
		  column shows the results for multivariate Spearman's $\rho$ and the right column
		  shows
		  the result
		  for Kendall's $\tau$.}\label{tab:mq-agglist}
		\begin{tabular}{ccccc}
			\toprule
			& \multicolumn{2}{c}{MQ2007-agglist} &
			\multicolumn{2}{c}{MQ2008-agglist}\\
			\cmidrule(r){2-3}
			\cmidrule(r){4-5}
			Method & $\rho$ & $\tau$ & $\rho$ & $\tau$\\
			\midrule
			RAGS-⊤ & 0.4394 & 0.6201 & 0.7235 & 0.6931\\
			RAGS-⊥ & 0.2992 & 0.2488 & 0.6349 & 0.5560\\
			\midrule
			GeoMean & 0.2457 & 0.3011 & 0.5777 & 0.6578\\
			Borda & 0.2217 & 0.1790 & 0.5519 & 0.5869\\
			\bottomrule
		\end{tabular}
	\end{table}

	\section{Discussion and Conclusion}

	We propose an approach for learning weights between experts for the
	task of rank aggregation.
	By generalising the derivation of concordance functions, we obtain an
	expression for multivariate Spearman's $\rho$.
	Furthermore, we show that the
	geometric mean of the expert ranks is the optimal aggregator under
	Spearman's correlation.
	Motivated by this, our method solves a least squares estimation problem
	for logarithmic normalised ranks to find optimal weights.

	One possible extension of our work is to compute the correlation for all
	possible subsets of rankings.
	While \cref{th:minW} shows that the overall
	correlation cannot be negative as the number of rankings increase, there may
	be subgroups which are positively correlated within groups but negatively
	correlated between groups.
	By computing the correlation on the power set,
	we could use a clustering method to find such subgroups.

	Though we have focused on $\rho_d^+$, our results are equally applicable to
	$\rho_d^-$; indeed it is a simple reversal of ranks that give $\rho_d^-$.
	The choice between $\rho_d^+$ and $\rho_d^-$ is thus problem dependent: for
	tasks where being ranked highly is more informative $\rho_d^+$ is a better
	choice; conversely $\rho_d^-$ is more suitable for tasks where being ranked
	lowly is more informative.

	In contrast to other rank aggregation approaches, our method is very
	computationally efficient.
	However, the core of our method requires a complete set of rankings and hence does
	not
	handle missing variables.
	To resolve this, we propose three imputation methods (unbiased, optimistic,
	pessimistic) for
	completing top-$k$ ranked lists that allows us to apply Spearman's $\rho$ to
	aggregate
	ranks from
	partial lists.
	Our method is thus applicable for large scale applications with top-$k$ rankings
	that arise in areas such as text mining and bioinformatics.
	One subtlety is that
	imputation from top-$k$ should not be confused with the choice of using
	$\rho_d^+$,
	which is an separate design choice.

	Surprisingly, our weighted geometric mean shows state of the art results on
	benchmark
	data sets, without the need for tuning hyperparameters or expensive computation.
	The simplicity of our model makes it easier to interpret, and the weights
	give a direct estimate of the influence of each expert.
	This problem has wide applications to ensemble learning, voting,
	text mining, recommender systems and bioinformatics.

	\acks{This work was completed when both authors were employed by NICTA\@.
	  NICTA was funded by the Australian Government through the Department of
	  Communications and the Australian Research Council through the ICT Centre of
	  Excellence Program.}

	\newpage
	\appendix
	\section{Bivariate Spearman's $\rho$ and Squared Distance}

	This well known
	result\footnote{\url{http://en.wikipedia.org/wiki/Spearman's_rank_correlation_coefficient}
	  accessed on 20 May 2014} shows that Spearman's $\rho$ can be expressed in
	terms of the squared distance between ranks.

	In the following derivation, we use the expressions for the sum of
	integers and the sum of squares of integers:
	\[
		\sum_{k=1}^n k_i = \frac{n(n+1)}{2}\qquad \sum_{k=1}^n k_i^2 = \frac{n(n+1)(2n+1)}{6}.
	\]
	Recall that Spearman's $\rho$ is defined~\eqref{eq:emp-spearman} as:
	\[
		\rho_n = \frac{\sum_{x} (R(x) - \bar{R}) (S(x) - \bar{S}) }
		{\sqrt{\sum_{x} {(R(x) - \bar{R})}^2 \sum_{x} {(S(x) - \bar{S})}^2}}.
	\]
	Since there are no ties, both $R(x)$ and $S(x)$ consist of integers
	from 1 to $n$ inclusive, and the two squared sums in the denominator
	are the same.
	Recall that the mean rank is
	\[
		\bar{R} = \bar{S} = \frac{n+1}{2},\quad\mbox{and}\quad
		\sum_x R(x) = \frac{n(n+1)}{2} = n\bar{R}.
	\]

	Therefore, the denominator can be expressed as a function of $n$:
	\begin{align*}
		\sqrt{\sum_{x} {(R(x) - \bar{R})}^2 \sum_{x} {(S(x) - \bar{S})}^2}
		&= \sum_{x} {(R(x) - \bar{R})}^2\\
		&= \sum_x ({R(x)}^2 - 2 R(x)\bar{R} + \bar{R}^2)\\
		&= \sum_x {R(x)}^2 - 2\bar{R}\sum_x R(x) + n\bar{R}^2\\
		&= \sum_x {R(x)}^2 - n\bar{R}^2\\
		&= \frac{n(n+1)(2n+1)}{6} - n{\left(\frac{n+1}{2}\right)}^2\\
		&= n(n+1)\left( \frac{2n+1}{6} - \frac{n+1}{4} \right)\\
		&= n(n+1)\left( \frac{n-1}{12} \right)\\
		&= \frac{n(n^2-1)}{12}.
	\end{align*}

	Since both $R(x)$ and $S(x)$ consists of the same integers, we can
	express the squared difference in terms of the product.
	\begin{align*}
		\sum_x \frac{1}{2} {(R(x) - S(x))}^2
		&= \sum_x \frac{1}{2} ({R(x)}^2 -2R(x)S(x) + {S(x)}^2)\\
		&= \sum_x \frac{1}{2} ({R(x)}^2 + {S(x)}^2) - \sum_x R(x)S(x)\\
		&= \sum_x {R(x)}^2 - \sum_x R(x)S(x),
	\end{align*}
	where the first term is a function of $n$.

	We express the product of the means, which appears in the numerator
	later, to match the denominator:
	\begin{align*}
		n{\left(\frac{n+1}{2}\right)}^2
		&= \frac{n(n+1)}{12} 3(n+1)\\
		&= - \frac{n(n+1)}{12} [(n-1) - (4n+2)]\\
		&= - \frac{n(n+1)(n-1)}{12} + \frac{n(n+1)(2n+1)}{6}\\
		&= - \frac{n(n^2-1)}{12} + \sum_x {R(x)}^2,
	\end{align*}
	where the last term in the sum is the expression for the sum of squares.
	We can now derive the expression for the numerator:
	\begin{align*}
		\sum_x (R(x) - \bar{R})(S(x)-\bar{S})
		&= \sum_x R(x)S(x) - \bar{R}\sum_x S(x) - \bar{S}\sum_x R(x) +
		n\bar{R}\bar{S}\\
		&= \sum_x R(x)S(x) - n\bar{R}\bar{S}\\
		&= \sum_x R(x)S(x) - n{\left(\frac{n+1}{2}\right)}^2\\
		&= \sum_x R(x)S(x) + \frac{n(n^2-1)}{12}- \sum_x {R(x)}^2\\
		&= \frac{n(n^2-1)}{12} - \sum_x \frac{1}{2} {(R(x) - S(x))}^2,
	\end{align*}
	where the last line uses the expression of the sum of squared
	differences above.
	Putting together the expressions for the numerator
	and denominator together gives the desired result:
	\[
		\rho_n = 1 - \frac{6\sum_x {(R(x) - S(x))}^2}{n(n^2-1)}.
	\]

	\clearpage

	\section{University Aggregate Rankings}\label{uniranks}

	\begin{center}
		\begin{tabular}{cc}
			\toprule%
			Rank & University\\
			\midrule%
			1 & Harvard University\\
			2 & Massachusetts Institute of Technology\\
			3 & Stanford University\\
			4 & California Institute of Technology\\
			5 & University of Cambridge\\
			6 & University of Oxford\\
			7 & Princeton University\\
			8 & University of Chicago\\
			9 & University of California, Berkeley\\
			10 & Imperial College London\\
			11 & ETH Zurich\\
			12 & University College London\\
			13 & Yale University\\
			14 & Columbia University\\
			15 & Johns Hopkins University\\
			16 & Cornell University\\
			17 & University of California, Los Angeles\\
			18 & University of Pennsylvania\\
			19 & University of Michigan\\
			20 & University of Toronto\\
			21 & Duke University\\
			22 & Northwestern University\\
			23 & University of Edinburgh\\
			24 & University of California, San Diego\\
			25 & King's College London\\
			26 & University of Washington\\
			27 & University of Tokyo\\
			28 & National University of Singapore\\
			29 & New York University\\
			30 & \'Ecole Polytechnique F\'ed\'erale de Lausanne\\
			31 & McGill University\\
			32 & University of Melbourne\\
			33 & University of Illinois at Urbana--Champaign\\
			34 & University of Wisconsin--Madison\\
			35 & University of British Columbia\\
			36 & University of Manchester\\
			37 & Australian National University\\
			\bottomrule%
		\end{tabular}
	\end{center}

	\clearpage

	\bibliography{../techreport/copula}

\begin{thebibliography}{31}
\providecommand{\natexlab}[1]{#1}
\providecommand{\url}[1]{\texttt{#1}}
\expandafter\ifx\csname urlstyle\endcsname\relax
  \providecommand{\doi}[1]{doi: #1}\else
  \providecommand{\doi}{doi: \begingroup \urlstyle{rm}\Url}\fi

\bibitem[Bed\H{o} et~al.(2014)Bed\H{o}, Rawlinson, Goudey, and
  Ong]{bedo14stabgb}
Justin Bed\H{o}, David Rawlinson, Benjamin Goudey, and Cheng~Soon Ong.
\newblock Stability of bivariate {GWAS} biomarker detection.
\newblock \emph{PLoS ONE}, 9:\penalty0 e93319, 04 2014.

\bibitem[Critchlow(1985)]{critchlow85metmap}
Douglas~E. Critchlow.
\newblock \emph{Metric Methods for Analyzing Partially Ranked Data}.
\newblock Springer-Verlag, 1985.

\bibitem[Diaconis(1988)]{diaconis88grorps}
Persi Diaconis.
\newblock \emph{Group representations in probability and statistics}, volume~11
  of \emph{Lecture Notes -- Monograph Series}.
\newblock Institute of Mathematical Statistics, 1988.

\bibitem[Durante and Sempi(2010)]{durante10copti}
Fabrizio Durante and Carlo Sempi.
\newblock Copula theory: An introduction.
\newblock In \emph{Copula Theory and Its Applications}, pages 3--31, 2010.

\bibitem[Elidan(2013)]{elidan13copml}
Gal Elidan.
\newblock Copulas in machine learning.
\newblock In \emph{Copulae in Mathematical and Quantitative Finance}, 2013.

\bibitem[Fligner and Verducci(1986)]{fligner86disbrm}
M.A. Fligner and J.S. Verducci.
\newblock Distance based ranking models.
\newblock \emph{Journal of the Royal Statistical Society. Series B
  (Methodological)}, pages 359--369, 1986.

\bibitem[Genest and Favre(2007)]{genest07eveyaw}
Christian Genest and Anne-Catherine Favre.
\newblock Everything you always wanted to know about copula modeling but were
  afraid to ask.
\newblock \emph{Journal of Hydrologic Engineering}, 12\penalty0 (4):\penalty0
  347--368, 2007.

\bibitem[Iyer and Bilmes(2012)]{iyer12subbld}
Rishabh Iyer and Jeff Bilmes.
\newblock The submodular {B}regman and {Lov\'asz-Bregman} divergences with
  applications.
\newblock In \emph{NIPS}, 2012.

\bibitem[Iyer and Bilmes(2013)]{iyer13lovbdc}
Rishabh Iyer and Jeff Bilmes.
\newblock The {Lov\'asz-Bregman} divergence and connections to rank
  aggregation, clustering and web ranking.
\newblock In \emph{UAI}, 2013.

\bibitem[Joe(1990)]{joe90mulc}
Harry Joe.
\newblock Multivariate concordance.
\newblock \emph{Journal of Multivariate Analysis}, 35:\penalty0 12--30, 1990.

\bibitem[Joe(2014)]{joe14}
Harry Joe.
\newblock \emph{Dependence Modeling with Copulas}.
\newblock CRC Press, 2014.

\bibitem[Klementiev et~al.(2008)Klementiev, Roth, and
  Small]{klementiev08unsrad}
Alexandre Klementiev, Dan Roth, and Kevin Small.
\newblock Unsupervised rank aggregation with distance-based models.
\newblock In \emph{International Conference on Machine Learning}, 2008.

\bibitem[Lebanon and Mao(2008)]{lebanon08nonpmp}
Guy Lebanon and Yi~Mao.
\newblock Non-parametric modeling of partially ranked data.
\newblock \emph{Journal of Machine Learning Research}, 9:\penalty0 2401--2429,
  2008.

\bibitem[Lehmann(1966)]{lehmann66somcd}
E.L. Lehmann.
\newblock Some concepts of dependence.
\newblock \emph{Annals of Mathematical Statistics}, 37:\penalty0 1137--1153,
  1966.

\bibitem[Liu(2011)]{liu08learir}
Tie-Yan Liu.
\newblock \emph{Learning to Rank for Information Retrieval}.
\newblock Springer-Verlag, 2011.

\bibitem[Macintyre et~al.(2014)Macintyre, Yepes, Ong, and
  Verspoor]{macintyre14assdrg}
Geoff Macintyre, Antonio~Jimeno Yepes, Cheng~Soon Ong, and Karin Verspoor.
\newblock Associating disease-related genetic variants in intergenic regions to
  the genes they impact.
\newblock \emph{PeerJ}, 2:\penalty0 e639, 2014.

\bibitem[Mallows(1957)]{mallows57nonnrm}
C.~L. Mallows.
\newblock Non-null ranking models.
\newblock \emph{Biometrika}, 44\penalty0 (1):\penalty0 114--130, 1957.

\bibitem[Nelsen(1996)]{nelsen96nonmma}
Roger~B. Nelsen.
\newblock Nonparametric measures of multivariate association.
\newblock \emph{Distributions with Fixed Marginals and Related Topics},
  28:\penalty0 223--232, 1996.

\bibitem[Nelsen(2006)]{nelsen06intc}
Roger~B. Nelsen.
\newblock \emph{An Introduction to Copulas}.
\newblock Springer, second edition, 2006.

\bibitem[Niu et~al.(2012)Niu, Guo, Lan, and Cheng]{Niu2012}
Shuzi Niu, Jiafeng Guo, Yanyan Lan, and Xueqi Cheng.
\newblock {Top-k learning to rank: Labeling, ranking and evaluation}.
\newblock In \emph{SIGIR}, pages 751--760, 2012.
\newblock ISBN 9781450314725.

\bibitem[Niu et~al.(2013)Niu, Lan, Guo, and Cheng]{niu13stora}
Shuzi Niu, Yanyan Lan, Jiafeng Guo, and Xueqi Cheng.
\newblock Stochastic rank aggregation.
\newblock In \emph{UAI}, 2013.

\bibitem[Qin et~al.(2010{\natexlab{a}})Qin, Geng, and Liu]{qin10promra}
Tao Qin, Xiubo Geng, and Tie-Yan Liu.
\newblock A new probabilistic model for rank aggregation.
\newblock In \emph{NIPS}, 2010{\natexlab{a}}.

\bibitem[Qin et~al.(2010{\natexlab{b}})Qin, Liu, Xu, and Li]{qin10letor}
Tao Qin, Tie-Yan Liu, Jun Xu, and Hang Li.
\newblock {LETOR}: A benchmark collection for research on learning to rank for
  information retrieval.
\newblock \emph{Information Retrieval Journal}, 2010{\natexlab{b}}.

\bibitem[Schmid and Schmidt(2007)]{schmid07mulesr}
Friedrich Schmid and Rafael Schmidt.
\newblock Multivariate extensions of spearman's rho and related statistics.
\newblock \emph{Statistics \& Probability Letters}, 77:\penalty0 407--416,
  2007.

\bibitem[Schmid et~al.(2010)Schmid, Schmidt, Blumentritt, Gaisser, and
  Ruppert]{schmid10copbmm}
Friedrich Schmid, Rafael Schmidt, Thomas Blumentritt, Sandra Gaisser, and
  Martin Ruppert.
\newblock Copula-based measures of multivariate association.
\newblock In \emph{Copula Theory and Its Applications}, pages 209--236, 2010.

\bibitem[Sculley(2010)]{sculley10comrr}
D.~Sculley.
\newblock Combined regression and ranking.
\newblock In \emph{KDD}, 2010.

\bibitem[Spearman(1904)]{spearman04promat}
C.~Spearman.
\newblock The proof and measurement of association between two things.
\newblock \emph{The American Journal of Psychology}, 15\penalty0 (1):\penalty0
  72--101, 1904.

\bibitem[Taylor(2007)]{taylor07mulmc}
Michael.~D. Taylor.
\newblock Multivariate measures of concordance.
\newblock \emph{Annals of the Institute of Statistical Mathematics},
  59\penalty0 (4):\penalty0 789--806, 2007.

\bibitem[Trivedi and Zimmer(2005)]{trivedi05copmip}
Pravin~K. Trivedi and David~M. Zimmer.
\newblock Copula modeling: An introduction for practitioners.
\newblock \emph{Foundations and Trends in Econometrics}, 1\penalty0
  (1):\penalty0 1--111, 2005.

\bibitem[\'{U}beda Flores(2005)]{ubeda-flores05mulvbb}
Manuel \'{U}beda Flores.
\newblock Multivariate versions of blomqvist's beta and spearman's footrule.
\newblock \emph{Annals of the Institute of Statistics and Mathematics},
  57\penalty0 (4):\penalty0 781--788, 2005.

\bibitem[Volkovs and Zemel(2012)]{volkovs12flegmp}
Maksims~N. Volkovs and Richard~S. Zemel.
\newblock A flexible generative model for preference aggregation.
\newblock In \emph{WWW}, 2012.

\end{thebibliography}

\end{document}